\title[On Counterfactual Data Augmentation Under Confounding]{On Counterfactual Data Augmentation Under Confounding}
\newcommand{\indep}{\perp \!\!\! \perp}
\DeclareMathOperator*{\argmin}{arg\,min}
\newcommand*\circled[1]{\tikz[baseline=(char.base)]{
            \node[shape=circle,draw,inner sep=2pt] (char) {#1};}}
\definecolor{codegreen}{rgb}{0,0.6,0}
\definecolor{codegray}{rgb}{0.5,0.5,0.5}
\definecolor{codepurple}{rgb}{0.58,0,0.82}
\definecolor{backcolour}{rgb}{0.95,0.95,0.92}
\lstdefinestyle{codestyle}{
    backgroundcolor=\color{backcolour},   
    commentstyle=\color{codegreen},
    keywordstyle=\color{magenta},
    numberstyle=\tiny\color{codegray},
    stringstyle=\color{codepurple},
    basicstyle=\ttfamily\footnotesize,
    breakatwhitespace=false,         
    breaklines=true,                 
    captionpos=b,                    
    keepspaces=true,                 
    numbers=left,                    
    numbersep=5pt,                  
    showspaces=false,                
    showstringspaces=false,
    showtabs=false,                  
    tabsize=2
}
\begin{document}

\maketitle

\begin{abstract}%
Counterfactual data augmentation has recently emerged as a method to mitigate confounding biases in the training data. These biases, such as spurious correlations, arise due to various observed and unobserved confounding variables in the data generation process. In this paper, we formally analyze how confounding biases impact downstream classifiers and present a causal viewpoint to the solutions based on counterfactual data augmentation. We explore how removing confounding biases serves as a means to learn invariant features, ultimately aiding in generalization beyond the observed data distribution. Additionally, we present a straightforward yet powerful algorithm for generating counterfactual images, which effectively mitigates the influence of confounding effects on downstream classifiers. Through experiments on MNIST variants and the CelebA datasets, we demonstrate how our simple augmentation method helps existing state-of-the-art methods achieve good results.
\end{abstract}

\begin{keywords}%
  Counterfactuals, Augmentation, Confounding, Bias, Correlation, Causality.
\end{keywords}

\section{Introduction}
A confounding variable is one that \textit{causes} two (or more) other variables, potentially creating spurious correlations between them. The presence of confounders is a challenge when working with real-world data, as the consequent spurious correlations make it difficult to identify reliable features that accurately represent the target label in machine learning applications~\citep{anchor,maximin,oodgen}.
For instance, the \textit{geographical location} where an individual resides can potentially cause both their \textit{race} and the level of \textit{education} they receive. When using such observational data to train a machine learning model that predicts an individual's \textit{income}, the model may inadvertently exploit the spurious correlations between \textit{race} and \textit{education}, leading to unfair \textit{income} predictions for individuals of different \textit{racial} backgrounds. Addressing confounding biases in trained machine learning models has demonstrated its usefulness in various applications such as zero or few-shot learning~\citep{causalviewofcompo,Yue_2021_CVPR}, disentanglement~\citep{suter2018robustly,reddy2022causally}, domain generalization~\citep{cgn,ali,ilse}, algorithmic fairness~\citep{kilbertus2020sensitivity,kilbertus2020fair} and healthcare~\citep{modelpatching,zhao2020training}. However, very few efforts have explicitly studied confounding bias in the context of data augmentation techniques.

Confounding in observational data poses substantial challenges for learning models, regardless of whether the confounding variables are observed or unobserved: (i) when confounders are present, disentanglement of features exhibiting spurious correlations through generative modeling becomes an arduous task~\citep{cgn,reddy2022causally,funke2022disentanglement}; (ii) it is infeasible to identify underlying generative factors without additional supervision~\citep{von2021self,causal_repr_learning}; and (iii) in the presence of confounders, classifiers may rely on non-causal features to make predictions~\citep{causal_repr_learning}. Recent endeavors have studied and attempted to address spurious correlations stemming from confounding effects in observational data~\citep{trauble,cgn,modelpatching,ilse,oodgen,provably,arjovsky2019invariant}. In this work, we study a lesser studied topic in this context -- the efficacy of counterfactual data augmentation for mitigating confounding in deep neural network (DNN) models, with a focus on image data. 

Many methods have been proposed for data augmentation in general to improve the performance of DNN models~\citep{Shorten2019}. Fewer efforts have studied this from a causal perspective; these studies have focused on issues such as interventions~\citep{ilse}, out-of-distribution generalization~\citep{oodgen}, model patching~\citep{modelpatching} or generative models~\citep{cgn}. The proposed work presents a different perspective by introducing a novel causal perspective on data augmentation and presents a careful study on how existing data augmentation techniques enable specific interventional queries within the underlying causal graph, leading to the generation of augmented data.

\begin{figure}
    \centering
    \scalebox{0.57}{
    \tikzset{every picture/.style={line width=0.75pt}} 
    \input{images/intro.tikz}
    }
    \vspace{-6pt}
    \caption{\footnotesize We illustratively show why it is useful to study a causal perspective to choose an appropriate intervention for mitigating confounding bias in data augmentations. (a) True causal graph $\mathcal{G}$ and inference procedure that utilizes the learned representation $\phi(X)$ of $X$ to predict the label $\hat{Y}$. $Z_0,Z_1,\dots,Z_n$ are generative factors, $U_1,\dots,U_n$ are confounding variables that may create spurious correlations among generative factors, and $Y$ is the true label. Gray-colored nodes represent observed variables. In the case of the double-colored MNIST dataset discussed herein, $Z_0$ is the causal feature (\textit{shape} of a digit) and $Z_1,\dots,Z_n$ capture other generative factors (e.g., \textit{background color, foreground color}) to form a real-world image $X$. (b) Causal model (defined by the structural equations) based on same graph $\mathcal{G}$ and corresponding samples from the double-colored MNIST dataset distribution generated from that causal model. Note that images in (b) encode confounding bias; for e.g., digit $1$ most often has a white foreground and green background. (c) Causal graph $\mathcal{G}_{do(X)}$ is an intervened causal graph derived from $\mathcal{G}$ by removing all incoming arrows to $X$, thus removing any backdoor paths from the confounders $U_i$s to $\hat{Y}$. We implement this using a CutMix~\citep{cutmix} augmentation derived from putting together randomly extracted image patches from other images. Note that this does not explicitly remove confounding bias in the generated images. (d) Causal graph $\mathcal{G}_{do(Z_0)}$ is an intervened causal graph derived from $\mathcal{G}$ by removing all incoming arrows to $Z_0$. Such an intervention helps remove the confounding bias in this case.}
    \vspace{-14pt}
    \label{fig:datagen}
\end{figure}
To comprehend the importance of a causal interpretation of data augmentation, consider the causal graph $\mathcal{G}$ from Figure~\ref{fig:datagen} (a) that captures many real-world causal generative processes~\citep{suter2018robustly,von2021self,ilse,reddy2022causally}. In $\mathcal{G}$, the causal feature $Z_0$ (e.g., \textit{shape} of a digit) and a set of generative factors $Z_1,\dots,Z_n$ (e.g., \textit{background color, foreground color}) form a real-world image $X$ (e.g., an image of handwritten digit \textit{1} with \textit{white foreground color and green background color as shown in   Figure~\ref{fig:datagen}} (b)) through an unknown causal mechanism $g$ i.e., $X=g(Z_0,Z_1,\dots,Z_n)$. Each $Z_i; i\in \{0,\dots,n\}$ is a function of exogenous noise variables $U_1,\dots,U_m$ that serve as confounders between pairs of generative factors $Z_0,\dots,Z_n$. Specifically, $Z_i=f_i(pa_{Z_i}); i\in \{0,\dots,n\}$ where $f_i$ is the causal mechanism for generating $Z_i$ and $pa_{Z_i}\subseteq \{U_1,\dots,U_m\}$ is the set of parents of $Z_i$. $Z_0, \dots, Z_n$ are confounded by $U_1,\dots,U_m$ that may be observed or unobserved (e.g., certain digits appear only in a certain combination of foreground and background colors). We note that this is an illustrative example, and our analysis remains valid even when the number of causal features exceeds one and even when not all exogenous noise variables cause all of the variables $Z_0, \ldots, Z_n$. Due to the presence of confounding variables $U_1,\dots,U_m$, models trained on $X$ may face challenges in predicting the true label $Y$ because in addition to a causal path $Z_0\rightarrow X\rightarrow \phi(X)\rightarrow \hat{Y}$ to the predicted label $\hat{Y}$, the causal feature $Z_0$ has \textit{back-door} paths~\citep{pearl2009causality} $Z_0\leftarrow U_j\rightarrow Z_i\rightarrow X\rightarrow \phi(X)\rightarrow \hat{Y}$ to $\hat{Y}$ for some $j\in\{1,\dots,m\}, i\in \{1,\dots,n\}$ that induce spurious correlations between causal feature $Z_0$ and non-causal features $Z_i; i\neq 0$. (We provide a concise overview of fundamental concepts essential for understanding our paper in Appendix \S~\ref{sec: preliminaries}.)

Traditional counterfactual data augmentation methods aim to augment the original data $\mathcal{D}$ with new data $\mathcal{D}'$ in order to create the augmented dataset $\mathcal{D}_{aug} = \mathcal{D} \cup \mathcal{D}'$. $\mathcal{D}_{aug}$ is often intended to capture an \textit{intervened} causal graph $\mathcal{G}_{do(\cdot)}$ in which there are no back-door paths from the confounders to $X$; however, not all data augmentation techniques can block back-door paths to effectively remove confounding effects (see Figure~\ref{fig:datagen} (c) and (d)). 
For instance, in the intervened causal graph $\mathcal{G}_{do(X)}$ of Figure~\ref{fig:datagen} (c), although there are no backdoor paths from the confounding variables to $X$, the confounding implicit in $X$ cannot be eliminated (i.e., in any patch of newly generated images, the combination of \textit{digit shape, foreground, background colors} remains unchanged). Also, the causal path $Z_0\rightarrow X$ has been removed in $\mathcal{G}_{do(X)}$, making it challenging to learn causal features from $X$.  It is worth noting that not all data augmentation techniques are universally applicable in all applications. For instance, as demonstrated in Figure~\ref{fig:datagen} (d), performing an intervention $do(Z_i=z_i)$ for $i\neq 0$ may be non-trivial. 
Given this background, in this paper, we adopt a causal perspective to investigate data augmentations and offer insights into existing methods that address confounding effects in observational data. \textit{Our objective herein is not to outperform state-of-the-art accuracy scores; rather, we aim to present a new causal perspective, and thereby, correct and simple procedures, for performing data augmentation when confronted with data that exhibit confounding effects and their corresponding utility on well-known tasks.} The main contributions of this paper can be summarized as follows.
 \vspace{-8pt}
\begin{itemize}[leftmargin=*]
\setlength \itemsep{-0.1em}
    \item We introduce a formal framework for quantifying the extent of confounding and investigate its relation with the non-linear dependency between pairs of generative factors (\S~\ref{sec: info theoretic}).
    \item We analyze the efficacy of counterfactual data augmentation in mitigating confounding bias, leveraging intervened causal model as a key tool (\S~\ref{sec: removing confounding effects}).
    \item We demonstrate the impact of confounding removal on achieving out-of-distribution generalization and learning invariant features (\S~\ref{sec: invariant}). We then propose a straightforward algorithm that enables the generation of counterfactual data, effectively eliminating confounding bias (\S~\ref{sec: algorithm}).
    \item Through extensive experiments conducted on widely recognized benchmarks, including three variants of the MNIST dataset and the CelebA dataset, we evaluate the effectiveness of our augmentation approach in conjunction with different methods and their utility on the performance of a downstream classifier against other augmentation methods (\S~\ref{sec: experiments and results}).
\end{itemize}

\section{Related Work}

\label{relatedwork}

\noindent \textbf{Image Data Augmentation:} Image data augmentation plays a crucial role in enhancing the performance and robustness of deep learning models in computer vision tasks. Numerous studies have extensively explored diverse techniques and strategies for augmenting image data. These efforts aim to achieve several objectives, including increasing the diversity of datasets, mitigating overfitting, improving generalization capabilities~\citep{imagenet,simonyan2014very,yang2022image}, strengthening resilience against adversarial attacks~\citep{madry2018towards,xie2020unsupervised}, facilitating domain generalization~\citep{ilse}, promoting algorithmic fairness~\citep{dataaugfair}, and more. Image data augmentations encompass a wide range of approaches, ranging from traditional image manipulation techniques such as rotation, flipping, cropping, among others~\citep{imagenet,simonyan2014very,perez2017effectiveness,augmix,cutout, mixup, cutmix,ilse}, to more recent generative-based augmentations~\citep{antoniou2017data,cgn,oodgen,modelpatching} that manipulate higher-level semantic aspects of an image, such as \textit{smiling} or \textit{hair color}.

\noindent \textbf{Counterfactual Data Augmentation:} Conventional data augmentation techniques, including rotation, scaling, and corruption, lack the ability to modify the underlying causal generative process. Consequently, they are unable to effectively mitigate confounding biases. For instance, rotation and scaling cannot \textit{separate} the color and shape of an object in an image. To overcome this limitation, counterfactual data augmentation has emerged as a promising approach~\citep{cgn,oodgen,modelpatching, kusner2017counterfactual,pitis2020counterfactual,denton2019detecting}. Counterfactual inference enables fine-grained control over the generative factors, allowing for the generation of new samples that effectively address confounding biases.

Pearl's influential contribution to the field of causality~\citep{pearl2009causality} presents a three-step methodology for generating counterfactual instances, encompassing the identification of underlying generative factors and the structural causal model (SCM). Recent research endeavors have focused on modeling the SCM under different assumptions, facilitating the generation of counterfactual images through targeted interventions within the learned model. The efficacy of counterfactual data augmentation has been substantiated across diverse real-world domains, encompassing applications such as fair classification~\citep{kusner2017counterfactual,denton2019detecting}, causal explanations~\citep{zmigrod2019counterfactual, pitis2020counterfactual,counterfactual_continuous,tractableinference}, identification of biases in real-world applications~\citep{joo2020gender}, and counterfactual data augmentation for reinforcement learning~\citep{pitis2020counterfactual}.   

A recent method known as Counterfactual Generative Networks (CGN)~\citep{cgn} assumes that each image is a result of a composition of three fixed generative factors: \textit{shape, texture, and background}. CGN trains a generative model that learns separate independent causal mechanisms for shape, texture, and background, and combines them deterministically to generate observations. By intervening on these learned mechanisms, counterfactual data can be sampled. However, the fixed architecture of CGN, which assumes a specific number and types of mechanisms (shape, texture, background), lacks generality and may not directly apply to scenarios where the number of underlying generative factors are more/unknown. Additionally, it is unnecessary to learn every causal mechanism in the underlying causal process to address a specific confounding bias in the data. Recently, CycleGANs~\citep{cycleGAN} have been utilized to generate counterfactual data points~\citep{modelpatching, oodgen}. Using CycleGANs, a transformation is learned between two image domains, and this learned transformation is employed to generate new images. These methods employ counterfactual data augmentation to address specific problems without formally analyzing the choice of data augmentation. Our study demonstrates that achieving confounding removal does not necessitate interventions on all generative factors. Instead, we propose a straightforward solution that involves intervening on a few generative factors.

Recently,~\citep{ilse} conducted a formal analysis of data augmentations from a causal perspective. In contrast to their work, we present a formal study that examines multiple approaches to data augmentation, analyzing their individual effectiveness in mitigating confounding bias through the use of a confounding measure.

\section{Preliminaries}
\label{sec: background}

Let $\mathbf{Z}=\{Z_i\}_{i=0}^n$ be a set of $n$ random variables denoting the generative factors of an observed variable $X$, and $Y$ be the observed (true) label of $X$. $Z_0$ is the causal feature such that the label $Y$ of $X$ is caused only by $Z_0$. Note that $Z_0$ can also be a set of variables that causally influence the output in general; without loss of generality, we treat it as a singleton set in this work for convenience of understanding and analysis. Variables in $\mathbf{Z}$ may potentially be confounded by a set of $m$ confounders $\mathbf{U}=\{U_1,\dots,U_m\}$ that denote real-world confounding factors such as selection bias, spurious correlations. Let $p_{\mathbf{U}}=\prod_{i=1}^m p_{U_i}$ be the joint probability distribution of $\mathbf{U}$ and $p_{Z_i}$ be the marginal probability distribution of $Z_i;\ \ \forall i\in\{0,\dots,n\}$. $\mathcal{G} = (\mathcal{V}, \mathcal{E})$ is the causal graph denoting the causal relationships among the set of variables $\mathcal{V}=\mathbf{Z}\cup \mathbf{U} \cup \{X,Y\}$. $\mathcal{E}$ is the set of directed edges among the variables in $\mathcal{V}$ denoting the directionality of causal influences. Let $pa_{Z_i} = \{U_j|U_j\rightarrow Z_i\}$ be the set of parents of $Z_i$. Each $Z_i$ can be viewed as an outcome of a causal mechanism $f_i$ with inputs $pa_{Z_i}$. $\mathcal{G}$ in Figure~\ref{fig:datagen} (a) illustrates the graphical representation of causal processes described above. Let $\mathcal{D}=\{(X_i, Y_i)\}_{i=1}^N$ be a set of $N$ input and label pairs where each observation $X_i$ is generated from the variables in $\mathbf{Z}$ through an unknown invertible causal mechanism $g$. Formally, the generative model for $X$ can be written as follows.
\vspace{-3pt}
\begin{equation}
\label{datagen}
    \mathbf{U}\sim p_{\mathbf{U}},\hspace{1.5cm} Z_i\coloneqq f_i(pa_{Z_i}), \hspace{1.5cm} X\coloneqq g(\mathbf{Z})
\end{equation}
During inference, when presented with an input $X$, it is essential to utilize the causal feature $Z_0$ of $X$ to predict $\hat{Y}$ (see Figure~\ref{fig:datagen} (a)). Nevertheless, presence of confounding variables $\mathbf{U}$ introduce non-causal or backdoor paths from $Z_0$ to $\hat{Y}$ through the variables contained in the set $\mathbf{Z}_{\setminus 0}=\{Z_1,\dots,Z_n\}$ (for instance, $Z_0\leftarrow U_j \rightarrow Z_i\rightarrow X\rightarrow \phi(X)\rightarrow \hat{Y}$, for some $j$, $i\neq 0$; $\setminus$ is the set difference operator). These backdoor paths result in spurious correlations among the variables in the set $\mathbf{Z}$. Let $\mathbf{Z}_{cnf} = \{Z_i|Z_0\leftarrow U_j \rightarrow Z_i, j\in{1,\dots,m}, i\neq0\}$ represent the set of variables belonging to a backdoor path from $Z_0$ to $\hat{Y}$. Due to these spurious correlations, a model may rely on $\mathbf{Z}_{cnf}$ for making predictions, disregarding the importance of $Z_0$. 

\begin{definition} \textbf{(Interventional Distribution~\citep{pearl2009causality})}
    \label{interventionaldistribution} 
    The interventional distribution of a set of variables $\mathbf{Z} = \{Z_0,\dots,Z_n\}$ under an intervention to $Z_i$ with a value $z_i$, denoted by $do(Z_i=z_i)$, is defined as:
    \vspace{-3pt}
    \begin{equation}
        p(Z_1,\dots,Z_n|do(Z_i=z_i)) = \begin{cases}
    \prod_{j\neq i}p(Z_j|pa_{Z_j})& \text{if}\ \ Z_i= z_i\\
    0              & \text{if}\ \ Z_i\neq z_i
            \end{cases}
    \end{equation}
\end{definition}
The resulting probability distribution of a set of variables $\mathbf{Z}_{\setminus i}=\{Z_0,\dots,Z_n\}\setminus \{Z_i\}$ under the intervention $do(Z_i=z_i)$ is same as the probability distribution of $\mathbf{Z}_{\setminus i}$ induced by the intervened causal graph $\mathcal{G}_{do(Z_i)}$.  $\mathcal{G}_{do(Z_i)}$ is obtained by removing all incoming arrows to $Z_i$ in $\mathcal{G}$~\citep{pearl2009causality} (See Figure~\ref{fig:datagen} (c), (d)). We use $do(Z_i)$ as a shorthand for $do(Z_i=z_i)$.
\begin{definition}
\label{noconfounding}
\textbf{(No Confounding~\citep{pearl2009causality})} Given a set of variables $\mathbf{Z} = \{Z_0,\dots,Z_n\}$, an ordered pair $(Z_i,Z_j); Z_i,Z_j \in \mathbf{Z}$ is unconfounded if and only if $p(Z_i|do(Z_j))=p(Z_i|Z_j)$.
\end{definition}
\begin{definition}
\label{def:directed_info}
\textbf{(Directed Information~\citep{directed,info_theoretic})} Given a set of variables $\mathbf{Z} = \{Z_0,\dots,Z_n\}$, the directed information $I(Z_i\rightarrow Z_j)$ from $Z_i$ to $Z_j$ is defined as the conditional Kullback-Leibler divergence between the distributions $p(Z_i|Z_j), p(Z_i|do(Z_j))$ given $Z_j$. Mathematically, $I(Z_i\rightarrow Z_j)$ is defined as:
\vspace{-3pt}
\begin{equation}
 I(Z_i\rightarrow Z_j)\coloneqq D_{KL}(p(Z_i|Z_j)|| p(Z_i|do(Z_j))|p(Z_j))
 \coloneqq \mathbb{E}_{p(Z_i,Z_j)} \log \frac{p(Z_i|Z_j)}{p(Z_i|do(Z_j))}
\end{equation}
\end{definition}
\vspace{-3pt}
We now leverage directed information to define a measure of confounding in the causal model~\ref{datagen}.

\section{An Information Theoretic Measure of Confounding}
\label{sec: info theoretic}

From Definitions~\ref{noconfounding} and~\ref{def:directed_info}, the variables $Z_i$ and $Z_j$ are unconfounded if and only if $I(Z_i\rightarrow Z_j)=0$ because no confounding implies $p(Z_i|do(Z_j))=p(Z_i|Z_j)$. However, if $I(Z_i\rightarrow Z_j)>0$, it implies that $p(Z_i|do(Z_j))\neq p(Z_i|Z_j)$ and hence the presence of confounding. Also, it is important to note that the directed information is not symmetric i.e., $I(Z_i\rightarrow Z_j) \neq I(Z_j\rightarrow Z_i)$~\citep{jiao2013universal}. Since we need to quantify the notion of \textit{confounding} (as opposed to \textit{no confounding}), we leverage directed information to quantify \textit{confounding} as defined below.
\begin{definition}
\label{def:confounding}
\textbf{(An Information Theoretic Measure of Confounding)}
Given a set of variables $\mathbf{Z} = \{Z_0,\dots,Z_n\}$, the confounding $CNF(Z_i;Z_j)$ between $Z_i$ and $Z_j$ is measured as 
\begin{equation}
\label{eq: cnf}
    CNF(Z_i;Z_j) := I(Z_i\rightarrow Z_j)+I(Z_j\rightarrow Z_i)
\end{equation} 
\end{definition}
Since directed information is not symmetric, we let the confounding measure include the directed information from both directions i.e., $I(Z_i\rightarrow Z_j)$ and $I(Z_j\rightarrow Z_i)$. We now relate $CNF(Z_i;Z_j)$ with the mutual information $I(Z_i;Z_j)$ between $Z_i,Z_j$ which is later used in further analysis.
\begin{restatable}[]{proposition}{propositionone}
\label{proposition1}
In the causal graph $\mathcal{G}$ of Figure~\ref{fig:datagen} (a), we have $p(Z_i|do(Z_j))=p(Z_i)$.
\end{restatable}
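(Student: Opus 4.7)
The plan is to exploit the structural fact, visible in Figure~\ref{fig:datagen}~(a), that in $\mathcal{G}$ there are no directed edges between any two generative factors $Z_i, Z_j$; every causal influence between them flows solely through their common exogenous parents $U_1,\dots,U_m$. I will carry out the argument in two steps: first applying the truncated factorization formula to the interventional distribution $p(Z_i\mid do(Z_j))$, and then showing via d-separation in the mutilated graph $\mathcal{G}_{do(Z_j)}$ that the result coincides with the observational marginal $p(Z_i)$.

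\textbf{Step 1: Truncated factorization.}
Starting from the generative model in Eq.~\ref{datagen}, the observational joint over $\mathbf{U}\cup\mathbf{Z}$ factorizes as $p(\mathbf{U},\mathbf{Z}) = \prod_{k=1}^{m}p_{U_k}(U_k)\prod_{\ell=0}^{n}p(Z_\ell\mid pa_{Z_\ell})$, using that the $U_k$'s are exogenous and independent. Applying the do-calculus rule for interventions, the post-intervention joint under $do(Z_j=z_j)$ is obtained by deleting the factor $p(Z_j\mid pa_{Z_j})$ and fixing $Z_j=z_j$. Marginalizing over $\mathbf{U}$ and over $\mathbf{Z}\setminus\{Z_i,Z_j\}$ then yields an expression for $p(Z_i\mid do(Z_j))$ that only involves the mechanism $f_i$ and the marginals $p_{U_k}$ of the parents of $Z_i$.

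\textbf{Step 2: D-separation in $\mathcal{G}_{do(Z_j)}$.}
In $\mathcal{G}_{do(Z_j)}$ all arrows $U_k\to Z_j$ are removed, so $Z_j$ becomes a root node with no parents and no directed path to $Z_i$ (since in $\mathcal{G}$ the $Z$'s do not cause each other). Any path in $\mathcal{G}$ between $Z_j$ and $Z_i$ must have entered $Z_j$ through some $U_k\to Z_j$ edge; each such path is broken in $\mathcal{G}_{do(Z_j)}$. Hence $Z_j$ is d-separated from $Z_i$ in the mutilated graph, which means $p(Z_i\mid do(Z_j))$ is independent of the value $z_j$. Combined with Step 1, this expression reduces to $\sum_{pa_{Z_i}} p(Z_i\mid pa_{Z_i})\prod_{U_k\in pa_{Z_i}} p_{U_k}(U_k) = p(Z_i)$, because the marginal distribution of $Z_i$ in $\mathcal{G}$ is determined entirely by $f_i$ and the exogenous distributions of $pa_{Z_i}$, both of which are unchanged by the intervention on $Z_j$.

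\textbf{Main obstacle.}
There is no deep technical obstacle; the argument is essentially structural. The subtlety worth being explicit about is that the equality $p(Z_i\mid do(Z_j))=p(Z_i)$ requires two conditions to hold simultaneously in $\mathcal{G}$: (i) $Z_j$ is not an ancestor of $Z_i$, and (ii) every common cause of $Z_i$ and $Z_j$ lies upstream of $Z_j$ so that the incoming-arrow surgery disconnects them. Both hold by inspection of Figure~\ref{fig:datagen}~(a), and the proposition as written assumes exactly this graph, so the proof reduces to pointing this out and invoking either the truncated factorization or, equivalently, Rule~3 of Pearl's do-calculus (action/observation exchange under d-separation in the mutilated graph).
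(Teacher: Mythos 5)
Your proof is correct, but it takes a different route from the paper's. The paper does not mutilate the graph at all: it applies the back-door adjustment formula with the adjustment set $\mathbf{U}_{cnf}=\{U_k \mid Z_i\leftarrow U_k \rightarrow Z_j\}$, writes $p(Z_i\mid do(Z_j))=\sum_{\mathbf{U}_{cnf}} p(Z_i\mid Z_j,\mathbf{U}_{cnf})\,p(\mathbf{U}_{cnf})$, drops the conditioning on $Z_j$ using $Z_i\indep Z_j\mid \mathbf{U}_{cnf}$ (which holds because the only remaining path between them runs through the collider at $X$), and marginalizes to get $p(Z_i)$. You instead work from the truncated factorization and argue d-separation of $Z_i$ and $Z_j$ in the surgically altered graph $\mathcal{G}_{do(Z_j)}$, i.e.\ essentially Rule 3 of the do-calculus. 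Both arguments are sound and of comparable length; the paper's version has the advantage of exhibiting the explicit adjustment-set computation that it reuses conceptually elsewhere, while yours makes the structural reason (no directed $Z_j\to Z_i$ influence, all common causes upstream of $Z_j$) more transparent. One small imprecision to fix in your Step 2: the claim that every path between $Z_j$ and $Z_i$ must enter $Z_j$ through some $U_k\to Z_j$ edge overlooks the path $Z_j\to X\leftarrow Z_i$, which survives the surgery; it is blocked, but only because $X$ is an unconditioned collider, which is exactly the observation the paper makes explicit and which your justification should state rather than leave implicit.
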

\begin{proof}
In the causal graph $\mathcal{G}$ of Figure~\ref{fig:datagen} (a), let $\mathbf{U}_{cnf}=\{U_k|Z_i\leftarrow U_k \rightarrow Z_j\}$ for some $i,j$ denote the set of all confounding variables that are part of some backdoor path from $Z_i$ to $Z_j$. Then,
{
\footnotesize
\begin{equation*}
        p(Z_i|do(Z_j)) = \sum_{\mathbf{U}_{cnf}} p(Z_i|Z_j,\mathbf{U}_{cnf})p(\mathbf{U}_{cnf})
        = \sum_{\mathbf{U}_{cnf}} p(Z_i|\mathbf{U}_{cnf})p(\mathbf{U}_{cnf}) = \sum_{\mathbf{U}_{cnf}} p(Z_i,\mathbf{U}_{cnf}) = p(Z_i)
\end{equation*} 
}
The first equality is due to the adjustment formula~\citep{pearl2001direct}, and the second equality is due to the \textit{collider} structure at $X$~\citep{pearl2009causality} i.e., $Z_i\indep Z_j|\mathbf{U}_{cnf}$.
\end{proof}

\begin{restatable}[]{proposition}{propositiontwo}
\label{proposition2}
In the causal graph $\mathcal{G}$ of Figure~\ref{fig:datagen} (a), we have $CNF(Z_i;Z_i)=2\times I(Z_i;Z_j)$.
\end{restatable}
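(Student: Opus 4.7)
The plan is to unfold both directed-information terms in Definition~\ref{def:confounding} using the identity established in Proposition~\ref{proposition1}, which swaps every interventional distribution $p(Z_i\mid do(Z_j))$ for the marginal $p(Z_i)$. Once this swap is made, each directed information reduces immediately to a standard mutual information, and symmetry of mutual information closes the argument.

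More concretely, I would first write
\begin{equation*}
 I(Z_i\rightarrow Z_j)
 \;=\;\mathbb{E}_{p(Z_i,Z_j)}\!\left[\log\frac{p(Z_i\mid Z_j)}{p(Z_i\mid do(Z_j))}\right]
\end{equation*}
from Definition~\ref{def:directed_info}, then invoke Proposition~\ref{proposition1} to replace $p(Z_i\mid do(Z_j))$ by $p(Z_i)$. The resulting expectation is exactly $\mathbb{E}_{p(Z_i,Z_j)}\log\frac{p(Z_i,Z_j)}{p(Z_i)p(Z_j)}=I(Z_i;Z_j)$. Repeating the same two moves on $I(Z_j\rightarrow Z_i)$ (which is symmetric in roles, and to which Proposition~\ref{proposition1} applies identically since the graph is symmetric in how $Z_i$ and $Z_j$ sit with respect to the confounders $\mathbf{U}_{cnf}$) gives $I(Z_j\rightarrow Z_i)=I(Z_j;Z_i)$. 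Summing the two directed informations and using the symmetry $I(Z_j;Z_i)=I(Z_i;Z_j)$ of mutual information yields $CNF(Z_i;Z_j)=2\,I(Z_i;Z_j)$, as desired.

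There is no real obstacle here: the entire content of the proposition is packaged inside Proposition~\ref{proposition1}. The only subtlety worth a brief sentence is justifying that Proposition~\ref{proposition1} applies symmetrically in $i,j$ — i.e., that the backdoor-adjustment and collider-at-$X$ arguments used there do not depend on which of $Z_i,Z_j$ is intervened upon. This is immediate from the structure of $\mathcal{G}$ in Figure~\ref{fig:datagen}(a), where all $Z_k$ play structurally equivalent roles as children of subsets of $\mathbf{U}$ and parents of $X$, so the intervention $do(Z_i)$ admits exactly the same adjustment set and the same $d$-separation $Z_j \indep Z_i \mid \mathbf{U}_{cnf}$ upon removal of incoming arrows to $Z_i$.
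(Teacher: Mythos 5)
Your proposal is correct and follows essentially the same route as the paper: both proofs reduce each directed-information term to a mutual information by invoking Proposition~\ref{proposition1} to replace $p(Z_i\mid do(Z_j))$ with $p(Z_i)$ (the paper merely combines the two logarithms before simplifying, whereas you simplify each term separately). Your added remark on why Proposition~\ref{proposition1} applies symmetrically in $i,j$ is a small but worthwhile clarification that the paper leaves implicit.
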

\begin{proof}
\begin{equation*}
    \begin{aligned}
    &I(Z_i\rightarrow Z_j)+I(Z_j\rightarrow Z_i)
        \stackrel{\text{Defn}~\ref{def:directed_info}}{=}\mathbb{E}_{Z_i, Z_j}\left[ \log(\frac{p(Z_i|Z_j)}{p(Z_i|do(Z_j))}) \right]+\mathbb{E}_{Z_i, Z_j} \left[ \log(\frac{p(Z_j|Z_i)}{p(Z_j|do(Z_i))}) \right]\\
        &=\mathbb{E}_{Z_i, Z_j} \left[ \log(\frac{p(Z_i|Z_j) p(Z_j|Z_i)}{p(Z_i|do(Z_j))p(Z_j|do(Z_i))}) \right]
         \stackrel{\text{Propn}~\ref{proposition1}}{=}  \mathbb{E}_{Z_i, Z_j} \left[ \log(\frac{p(Z_i|Z_j) p(Z_j|Z_i)}{p(Z_i)p(Z_j)}) \right]\\
         &=\mathbb{E}_{Z_i, Z_j} \left[ \log(\frac{p(Z_i|Z_j)p(Z_j) p(Z_j|Z_i)p(Z_i)}{p(Z_i)p(Z_j)p(Z_i)p(Z_j)})\right]
         = \mathbb{E}_{Z_i, Z_j} \left[ \log(\frac{p(Z_i,Z_j)^2}{(p(Z_i)p(Z_j))^2}) \right]\\
         &= 2\times \mathbb{E}_{Z_i, Z_j} \left[ \log(\frac{p(Z_i,Z_j)}{p(Z_i)p(Z_j)})\right]= 2 \times I(Z_i;Z_j)\\
    \end{aligned}
\end{equation*}
\end{proof}
The properties of mutual information imply that $CNF(Z_i;Z_i)$ is both non-negative and symmetric. Building upon Proposition~\ref{proposition2}, we approach the task of eliminating confounding between $Z_0$ and $Z_i$ for all $Z_i\in \mathbf{Z}_{cnf}$ as the problem of minimizing the mutual information $I(Z_0;Z_i)$ for each $Z_i\in \mathbf{Z}_{cnf}$. In the next section, we explore methodologies for minimizing $I(Z_0;Z_i)$.

\section{Removing Confounding Effects}
\label{sec: removing confounding effects}

Recall that our goal is to remove the non-causal associations from $Z_0$ to $\hat{Y}$ that go via the back-door paths, which can be achieved by minimizing $I(Z_0;Z_i);\ \forall Z_i\in \mathbf{Z}_{cnf}$ (Proposition~\ref{proposition2}). From a causal graphical model's perspective, performing interventions on $Z_0$ or $Z_i$ or both $Z_0,Z_i$ ensures $I(Z_0;Z_i)=0$ as shown in the proposition below. 

\begin{restatable}[]{proposition}{propositionthree}
\label{proposition3}
    For $\mathcal{G}_{Z_0}, \mathcal{G}_{Z_i}, \mathcal{G}_{\{Z_0,Z_i\}}$ of $\mathcal{G}$ of Figure~\ref{fig:datagen} (a), $CNF(Z_0;Z_i)=0$ for $i\neq 0$.
\end{restatable}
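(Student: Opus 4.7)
The plan is to unpack $CNF(Z_0;Z_i)$ via Definition~\ref{def:confounding} and reduce the claim to showing that in each of the three intervened graphs $\mathcal{G}_{do(Z_0)}, \mathcal{G}_{do(Z_i)}, \mathcal{G}_{do(\{Z_0,Z_i\})}$, the conditional and interventional distributions relating $Z_0$ and $Z_i$ coincide. Since the directed information $I(A\to B)$ vanishes precisely when $p(A\mid B)=p(A\mid do(B))$, it suffices to prove this equality for both orderings of $(Z_0,Z_i)$ in each of the three mutilated graphs.

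First I would treat $\mathcal{G}_{do(Z_0)}$. By definition, this graph is obtained from $\mathcal{G}$ by deleting every arrow into $Z_0$, so all common parents $U_j$ no longer influence $Z_0$, which becomes exogenous. The only remaining edges touching $Z_0$ and $Z_i$ are the outgoing arrows $Z_0\to X$ and $Z_i\to X$, making $X$ a collider on every path between them. By $d$-separation, $Z_0\indep Z_i$ in $\mathcal{G}_{do(Z_0)}$, hence $p(Z_i\mid Z_0)=p(Z_i)$. Moreover, applying the argument of Proposition~\ref{proposition1} inside $\mathcal{G}_{do(Z_0)}$ (where the residual confounder set over the pair $(Z_0,Z_i)$ is empty) yields $p(Z_i\mid do(Z_0))=p(Z_i)$, so the two distributions agree and $I(Z_0\to Z_i)=0$. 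The reverse directed information $I(Z_i\to Z_0)$ is handled even more directly: since $Z_0$ has no parents in $\mathcal{G}_{do(Z_0)}$, both $p(Z_0\mid Z_i)$ and $p(Z_0\mid do(Z_i))$ collapse to the marginal $p(Z_0)$.

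For $\mathcal{G}_{do(Z_i)}$ the argument is symmetric: deleting incoming arrows to $Z_i$ severs every backdoor path $Z_0\leftarrow U_j\to Z_i$, and the only remaining connection is through the collider $X$, giving $Z_0\indep Z_i$ and hence $p(Z_0\mid Z_i)=p(Z_0)=p(Z_0\mid do(Z_i))$, and similarly in the other direction. The joint intervention case $\mathcal{G}_{do(\{Z_0,Z_i\})}$ inherits both mutilations simultaneously, so both variables become exogenous and the same collider/marginalization argument applies a fortiori. In every case both summands of~\eqref{eq: cnf} vanish and therefore $CNF(Z_0;Z_i)=0$.

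The only step requiring real care, and the place I expect to need the most bookkeeping, is verifying the interventional equality $p(Z_i\mid do(Z_0))=p(Z_i)$ inside the mutilated graphs rather than in $\mathcal{G}$ itself: one must confirm that the confounder set $\mathbf{U}_{cnf}$ invoked in Proposition~\ref{proposition1} is indeed empty after the arrow deletion, which rests on the structural observation that every $U_k$ that previously lay on a backdoor path $Z_0\leftarrow U_k\to Z_i$ has lost its arrow into the intervened variable. Once this is made explicit, the rest is a direct application of Definitions~\ref{interventionaldistribution},~\ref{noconfounding}, and~\ref{def:directed_info} together with the collider rule of $d$-separation.
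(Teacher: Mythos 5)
Your proof is correct, but it takes a somewhat different route from the paper's. The paper first invokes Proposition~\ref{proposition2} to reduce $CNF(Z_0;Z_i)=0$ to $I(Z_0;Z_i)=0$, i.e., to the factorization $p(Z_0,Z_i)=p(Z_0)p(Z_i)$, and then disposes of all three mutilated graphs in one stroke by observing that $X$ is a collider and no back-door path $Z_0\leftarrow U_j\rightarrow Z_i$ survives the intervention. You instead bypass the mutual-information reduction entirely and show that each of the two directed-information summands in Definition~\ref{def:confounding} vanishes on its own, by verifying $p(A\mid B)=p(A\mid do(B))$ for both orderings in each of $\mathcal{G}_{do(Z_0)}$, $\mathcal{G}_{do(Z_i)}$, and $\mathcal{G}_{do(\{Z_0,Z_i\})}$. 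The underlying structural fact is the same in both arguments (collider at $X$ plus severed back-door paths gives $Z_0\indep Z_i$), so neither route is deeper than the other; the paper's is shorter once Proposition~\ref{proposition2} is available, while yours is more self-contained and, usefully, makes explicit a point the paper glosses over --- that Propositions~\ref{proposition1} and~\ref{proposition2} were stated for $\mathcal{G}$, and one must check that the relevant identity $p(Z_i\mid do(Z_j))=p(Z_i)$ still holds in the mutilated graphs because the residual confounder set $\mathbf{U}_{cnf}$ becomes empty after the arrow deletion. Your closing caveat about that bookkeeping is exactly the right thing to flag, and it resolves in your favor.
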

\begin{proof}
    For any $i\neq 0$, showing $CNF(Z_0;Z_i)=0$ is the same as showing $I(Z_0;Z_i)=0$ (Proposition~\ref{proposition2}). That is, we need to show $p(Z_0,Z_i)=p(Z_0)p(Z_i)$ (definition of mutual information). Since $X$ is a collider in each of $\mathcal{G}_{Z_0}, \mathcal{G}_{Z_i}, \mathcal{G}_{\{Z_0,Z_i\}}$ and there is no back-door path of the form $Z_0\leftarrow U_j \rightarrow Z_i$, we have $p(Z_0,Z_i)=p(Z_0)p(Z_i)$.
\end{proof}
From Proposition~\ref{proposition3}, one way of ensuring $I(Z_0;Z_i)=0;\ \forall Z_i\in \mathbf{Z}_{cnf}$ is to augment $\mathcal{D}$ with data generated from the causal models whose underlying causal graphs are: $\mathcal{G}_{Z_0}, \mathcal{G}_{\mathbf{Z}_{cnf}}, \mathcal{G}_{\mathbf{Z}_{cnf}\cup \{Z_0\}}$. That is, the augmented data should be generated from one of the following causal models~\ref{doc}-\ref{docz}.
\vspace{-3pt}
{
\small
\begin{align}
        &\mathbf{U}\sim p_{\mathbf{U}},&& Z_0 \sim p_{Z_0}, && Z_i\coloneqq f_i(pa(Z_i))\ \ i\in\{1,\dots,n\}, & X\coloneqq g(\mathbf{Z}) \label{doc}\\
        &\mathbf{U}\sim p_{\mathbf{U}},&& Z_i \sim p_{Z_i};\ \forall Z_i\in \mathbf{Z}_{cnf}, && Z_j\coloneqq f_j(pa(Z_j));\ \forall Z_j\not \in \mathbf{Z}_{cnf}, & X\coloneqq g(\mathbf{Z})\label{doz} \\
        &\mathbf{U}\sim p_{\mathbf{U}},&& Z_i \sim p_{Z_i};\ \forall Z_i\in \mathbf{Z}_{cnf}\cup \{Z_0\}, &&  Z_j\coloneqq f_j(pa(Z_j)); \ \forall Z_j\not \in \mathbf{Z}_{cnf}\cup \{Z_0\}, & X\coloneqq g(\mathbf{Z}) \label{docz}       
\end{align}
}
As explained in \S~\ref{relatedwork}, counterfactual generative networks (CGN)~\citep{cgn} generates counterfactual images by simulating causal model in Equation~\ref{docz} above, performing interventions on all of $\{Z_0\}\cup\mathbf{Z}_{cnf}$. However, performing interventions on all of $\{Z_0\}\cup\mathbf{Z}_{cnf}$ is neither necessary nor efficient. Also, in many scenarios, it is challenging to identify all possible generative factors to perform interventions. Recent methods on out-of-distribution generalization~\citep{oodgen} and invariant feature learning~\citep{modelpatching} generate counterfactuals by simulating the causal model in Equation~\ref{doz}, performing interventions on $\mathbf{Z}_{cnf}$. Traditional augmentation methods based on image manipulations such as Cutout~\citep{cutout}, CutMix~\citep{cutmix}, AugMix~\citep{augmix}, Auto Augment~\citep{autoaug}, Mixup~\citep{mixup} can be viewed as simulating causal model in Equation~\ref{dox} below, performing intervention directly on $X$. However, such models do not have causal path to $X$ from the causal feature $Z_0$ making it challenging to learn features representative of true label $Y$ when there is confounding.

\begin{equation}
    \label{dox}
    \mathbf{U}\sim p_{\mathbf{U}},\hspace{1cm} Z_i\coloneqq f_i(pa(Z_i)), \hspace{1cm} X'\coloneqq g(\mathbf{Z}), \hspace{1cm} do(X= h(X))
\end{equation}
In Equation~\ref{dox}, $h$ is a function that takes an instance $X'$ and returns a new instance $X$ after performing some changes to $X'$. The causal graphical models corresponding to models~\ref{doc},~\ref{doz},~\ref{docz}, and~\ref{dox} are shown in Figure~\ref{fig:solutions}. 
\begin{figure}
\centering
\scalebox{0.60}{
\tikzset{every picture/.style={line width=0.75pt}} 
\input{images/figure2.tikz}
}
\caption{Comparison of various interventions on $\mathcal{G}$. Few works that use these kinds of interventions are as follows. $\mathcal{G}_{do(\mathbf{Z}_{cnf})}$: ~\cite{oodgen, modelpatching} , $\mathcal{G}_{do(\{Z_0\}\cup \mathbf{Z}_{cnf})}$:~\citep{cgn,advmix}, and $\mathcal{G}_{do(X)}$: ~\citep{augmix,cutmix,cutout,mixup}. For simplicity, in this figure, assume $\mathbf{Z}_{cnf}=\mathbf{Z}_{\setminus 0}$.}
\vspace{-13pt}
\label{fig:solutions}
\end{figure}
In this paper, we propose to simulate the causal model in Equation~\ref{doc} to generate counterfactual images so that it is required to perform an intervention on only one feature $Z_0$ (Algorithm~\ref{algo:cfgeneration}). To simulate the causal models~\ref{doc}-\ref{docz}, it is necessary to identify the underlying generative factors $Z_0,\dots,Z_n$ in the presence of data exhibiting confounding bias (generated from causal model in Equation~\ref{datagen}). Once the generative factors $Z_0,\dots,Z_n$ have been identified, the process of conducting interventions and sampling images aligns with the process of counterfactual generation as formalized below.
\begin{definition}
\label{counterfactuals}
\textbf{(Counterfactual~\citep{pearl2009causality})} Given an observation $X$ with generative factors $Z_0=z_0,\dots,Z_i=z_i,\dots, Z_n=z_n$, the counterfactual $X_{cf}^i$ of $X$ w.r.t. generative factor $Z_i$ is generated using the following 3-step counterfactual inference procedure.
\vspace{-5pt}
\begin{itemize}[leftmargin=*]
\setlength\itemsep{-0.25em}
    \item \textbf{Abduction:} Recover/identify the values of $z_0,\dots,z_n$ as $z_0,\dots,z_n = g^{-1}(X)$
    \item \textbf{Action:} Perform the intervention $do(Z_i=z_i')$
    \item \textbf{Prediction:} Generate the counterfactual $X_{cf}^i$ as $X_{cf}^i=g(Z_0=z_0,\dots,Z_i=z_i',\dots,Z_n=z_n)$
\end{itemize}
\end{definition}
\begin{definition}
\label{counterfactual_identification}
\textbf{(Counterfactual Identifiability Under Confounding)} For a given observation $X$ generated using the causal model~\ref{datagen}, we say that the counterfactual $X_{cf}^i$ of $X$ is identifiable by an invertible function $\tilde{g}$ if and only if there exists an invertible function $h$ such that $z_1,\dots,z_i,\dots,z_n =h(\tilde{g}^{-1}(X))$ and $X_{cf}^i = \tilde{g}(h^{-1}(z_1,\dots,z_i',\dots,z_n));\ \forall z_i\sim p_{Z_i}$.
\end{definition}

Definition~\ref{counterfactual_identification} essentially says that if there exists an invertible function $\tilde{g}$ that identifies the underlying generative factors up to a transformation $h$, then the counterfactual $X_{cf}^i$ is identifiable i.e., Figure~\ref{commutative} commutes. Invertibility of $h$ is essential to guarantee one-to-one mapping between learned and true generative factors under confounding.

\begin{wrapfigure}[9]{r}{0.60\textwidth}
\centering
\vspace{-5pt}
\[\begin{tikzcd}
	{\mathbf{Z}} & {\tilde{\mathbf{Z}}} & X & {\mathbf{Z}} \\
	{\mathbf{Z}'} & {\tilde{\mathbf{Z}'}} & {X^i_{cf}} & {\mathbf{Z}'}
	\arrow["{g^{-1}}", maps to, from=1-3, to=1-4]
	\arrow["{do(Z_i=z_i')}", maps to, from=1-4, to=2-4]
	\arrow["g", maps to, from=2-4, to=2-3]
	\arrow["{\tilde{g}^{-1}}"', maps to, from=1-3, to=1-2]
	\arrow["h"', maps to, from=1-2, to=1-1]
	\arrow["{do(Z_i=z_i')}"', maps to, from=1-1, to=2-1]
	\arrow["{h^{-1}}"', maps to, from=2-1, to=2-2]
	\arrow["{\tilde{g}}"', maps to, from=2-2, to=2-3]
\end{tikzcd}\]
\caption{\small Commutative diagram for counterfactual identifiability}
\label{commutative}
\end{wrapfigure}
Given only observational data $\mathcal{D}$ with confounding effects, a model trained on $\mathcal{D}$ should be able to support counterfactual identification (Definition~\ref{counterfactual_identification}). This capability enables the generation of counterfactual images and facilitates subsequent data augmentation. Consequently, in the next section, we investigate how removing confounding can enhance out-of-distribution generalization and support the learning of invariant causal features.

\section{Connections to Invariant Feature Learning and Out-Of-Distribution Generalization}
\label{sec: invariant}

\noindent \textbf{Invariant Feature Learning}: In representation learning, a common approach to learn the causal/invariant feature $Z_0$ representative of a true label $Y$ is to enforce the constraint $\hat{Y}\indep Z_i|Z_0; \ \forall Z_i\in\mathbf{Z}_{cnf}$~\citep{ganin2016domain,li2018deep,condadv,modelpatching}, i.e., for a given causal feature $Z_0$, the prediction $\hat{Y}$ is independent of $Z_i;\ \forall Z_i\in \mathbf{Z}_{cnf}$. In our setting, we prove that the invariance condition $\hat{Y}\indep Z_i|Z_0;\ \forall Z_i\in \mathbf{Z}_{cnf}$ can be viewed as minimizing the confounding effects $CNF(Z_0;Z_i);\ \forall Z_i\in \mathbf{Z}_{cnf}$ along with the constraint that the prediction $\hat{Y}$ is independent of $Z_i;\ \forall Z_i\in\mathbf{Z}_{cnf}$ given $Z_0$. Concretely, consider the following expansion of $I(Z_i;\hat{Y}|Z_0)$, whose minimization is a way of enforcing $\hat{Y}\indep Z_i|Z_0$.
{\small
\begin{align*}
&I(Z_i;\hat{Y}|Z_0) = I(Z_i;\hat{Y},Z_0) - I(Z_i;Z_0)=\mathbb{E}_{Z_i,Z_0,\hat{Y}}\left[\log(\frac{p(Z_i)p(\hat{Y},Z_0)}{p(Z_i,Z_0,\hat{Y})})\right] - I(Z_i;Z_0)\\    
&=\mathop{\mathbb{E}}_{Z_i,Z_0,\hat{Y}}\left[\log(\frac{\cancel{p(Z_i)}p(Z_0)p(\hat{Y}|Z_0)}{\cancel{p(Z_i)}p(Z_0|Z_i)p(\hat{Y}|Z_0,Z_i)})\right] - I(Z_i;Z_0)=\underbrace{\mathop{\mathbb{E}}_{Z_i,Z_0,\hat{Y}}\left[\log(\frac{p(Z_0)p(\hat{Y}|Z_0)}{p(Z_0|Z_i)p(\hat{Y}|Z_0,Z_i)})\right]}_{{\footnotesize \circled{1}}} - \underbrace{I(Z_0;Z_i)}_{\frac{CNF(Z_0;Z_i)}{2}}
\end{align*}
}
In the above expansion, Since $I(Z_i;\hat{Y}|Z_0)$, the term ${\footnotesize \circled{1}}$ and $I(Z_0;Z_i)$ are always non-negative, the minimum value for $I(Z_i;\hat{Y}|Z_0)$ is obtained when: (i) $I(Z_0;Z_i)=0$, (ii) $p(Z_0)=p(Z_0|Z_i)$ and (iii) $p(\hat{Y}|Z_0)=p(\hat{Y}|Z_0,Z_i)$. Enforcing $I(Z_0;Z_i)=0$ is the same as removing confounding (Proposition~\ref{proposition2}) which will in turn ensure $p(Z_0)=p(Z_0|Z_i)$. Finally, $p(\hat{Y}|Z_0)=p(\hat{Y}|Z_0,Z_i)$ is achieved when the prediction $\hat{Y}$ is independent of $Z_i$ given $Z_0$.

\noindent \textbf{Out-Of-Distribution (OOD) Generalization: } The OOD generalization problem~\citep{oodgen,arjovsky2019invariant,buhlmann2020invariance} can also be viewed as a confounding bias removal problem. To formally establish this connection, let us consider the following scenario: the true label $Y$ can be regarded as a function $M$ of the causal feature $Z_0$ associated with $X$, that is,
\begin{equation}
\label{equationy}
    Y = M(Z_0) = M(F(X))
\end{equation}
Here $F$ is a function that extracts the causal feature $Z_0$ from $X$. Given a set of distributions $\mathcal{P}(X,Y)$ on $X, Y$, the goal in OOD generalization is to find a model $h^*$ such that the following holds~\citep{oodgen} ($\mathcal{L}$ denotes a loss function):
\begin{equation}
\label{hstar}
    h^* = \argmin_{h} \sup_{p\in \mathcal{P}} \mathbb{E}_{p}[\mathcal{L}(h(X), Y)]
\end{equation}

\begin{definition}
    \textbf{Causal Invariant Transformation ~\citep{oodgen}.} A transformation $T$ is called a causal invariant transformation if $(F\circ T)(X)=F(X);\ \forall X$. 
\end{definition}
\begin{definition}
    \textbf{Causal Essential Set ~\citep{oodgen}.} A subset $\mathcal{T}$ of all possible causal invariant transformations is called a causal essential set if for all $X_i, X_j$ such that $F(X_i)=F(X_j)$, there are finite transformations $T_1(.),\dots,T_k(.)\in \mathcal{T}$ such that $(T_1\circ\dots \circ T_k)(X_i)=X_j$.
\end{definition}
Using a causal essential set of transformations $\mathcal{T}$, it has been proved that it is possible to get $h^*$ using the augmented data $\mathcal{D}_{aug}$ generated using $\mathcal{T}$~\citep{oodgen}. In our setting, we can view counterfactual generation w.r.t. $Z_i; \ i\neq 0$ as a causal invariant transformation, augmenting counterfactuals that are generated using the simulated causal model in Equation~\ref{doz} with original data $\mathcal{D}$ aids in learning $h^*$ (Equation\ref{hstar}). 

Having examined the diverse ways of generating counterfactual images, we present a simple algorithm for generating counterfactuals by simulating causal model in Equation~\ref{doc}.

\vspace{-7pt}
\subsection{Algorithm}
\label{sec: algorithm}
\vspace{-7pt}


\begin{algorithm}
\footnotesize
\caption{Counterfactual image generation using a conditional generative model $\mathcal{M}$}
\label{algo:cfgeneration}
    \KwResult{Images sampled from a conditional generative model $\mathcal{M}$ conditioned on $Z_0$.}
    \KwData{$\mathcal{D}=\{(X_i, Y_i)\}_{i=1}^N$, $\mathbf{Z}_{cnf}$, A trained model $\mathcal{M}$, $\tau$ denoting the level of confounding.}
    $\mathcal{D}'=[]$
    
    \For{each $Z_j\in \mathbf{Z}_{cnf}$}{
    \For{each $z_0\sim Z_0 \& z_j\sim Z_j$}{
    $T = \{(X,Y)\in \mathcal{D}|Z_0 = z_0 \& Z_j=z_j$\}
    \tcp*{Filter spuriously correlated images}
    \If {$|T|/|D|>\tau$}{
    $cfs = \mathcal{M}(T)$
    \tcp*{Generate counterfactuals w.r.t. $Z_0$}
    append $cfs$ to $\mathcal{D}'$
    }
    }
    }
    return $\mathcal{D}'$
\end{algorithm}

Our objective is to employ counterfactual data augmentation to mitigate the presence of confounding bias in training data. To achieve this, we utilize a simulated causal model~\ref{doc}, where an intervention is performed on the variable $Z_0$. To simulate causal model~\ref{doc}, we use various conditional generative models, including the conditional diffusion model~\citep{ho2020denoising} (see \S~\ref{sec: experiments and results}). Previous approaches, as discussed in \S~\ref{sec: removing confounding effects}, have typically simulated one of the causal models~\ref{doz}-\ref{dox} to generate counterfactuals. However, adopting the causal model~\ref{doc} offers the advantage of requiring a single intervention solely on $Z_0$ to generate counterfactual images, in contrast to the multiple interventions required by causal models~\ref{doz}-\ref{dox}. Despite its simplicity, our proposed approach helps state-of-the-art models retain their performance compared to other ways of generating counterfactual images (see Table~\ref{results}).

\vspace{-7pt}
\section{Experiments and Results}
\label{sec: experiments and results}
\vspace{-5pt}

This section presents the experimental results on synthetic (MNIST variants) and real-world (CelebA) datasets. In order to study confounding bias, we infuse confounding in the training data and leave test data unconfounded (i.e., no spurious correlations among the generative factors; please see the Appendix for more details on implementation details). We do this to study standard generalization performance using our confounding-aware augmentation method used in the training phase.
We compare data augmentations based on causal models~\ref{doc}-\ref{dox} using standard Empirical Risk Minimization (ERM), ERM trained on unconfounded data alone (ERM-UC) in the training data, i.e., a fraction of training data that doesn't contain spurious correlations, ERM with re-weighting (ERM-RW) where multiple replicas of unconfounded data are added back to training data, conditional GAN (C-GAN)~\citep{gan}, conditional VAE (C-VAE)~\citep{vae}, Conditional-$\beta$-VAE (C-$\beta$-VAE)~\citep{betavae} ($\beta=5$ for MNIST experiments and $\beta=10$ for CelebA experiments),  AugMix~\citep{augmix}, CutMix~\citep{cutmix}, invariant risk minimization (IRM)~\citep{arjovsky2019invariant}, GroupDRO~\citep{groupdro}, CycleGAN~\citep{cycleGAN}, counterfactual generative networks (CGN)~\citep{cgn}, and conditional diffusion models (C-DM)~\citep{ho2020denoising}. 
More information on the experimental setup and qualitative results are presented in Appendix\S~\ref{sec: expsetup}.

\noindent \textbf{Colored, Double-colored, Wildlife MNIST Datasets:}
Following earlier related work, we use three synthetic datasets by leveraging the MNIST dataset~\citep{mnist} as well as its colored~\citep{arjovsky2019invariant}, textured~\citep{cgn}, and morpho~\citep{morpho} variants, which control the digit thickness (see Figure~\ref{fig:mnistvariants} and Appendix \S~\ref{sec: expsetup} for sample images). 
\begin{wrapfigure}[14]{r}{0.56\textwidth}
    \centering
    \scalebox{0.093}{
    \includegraphics{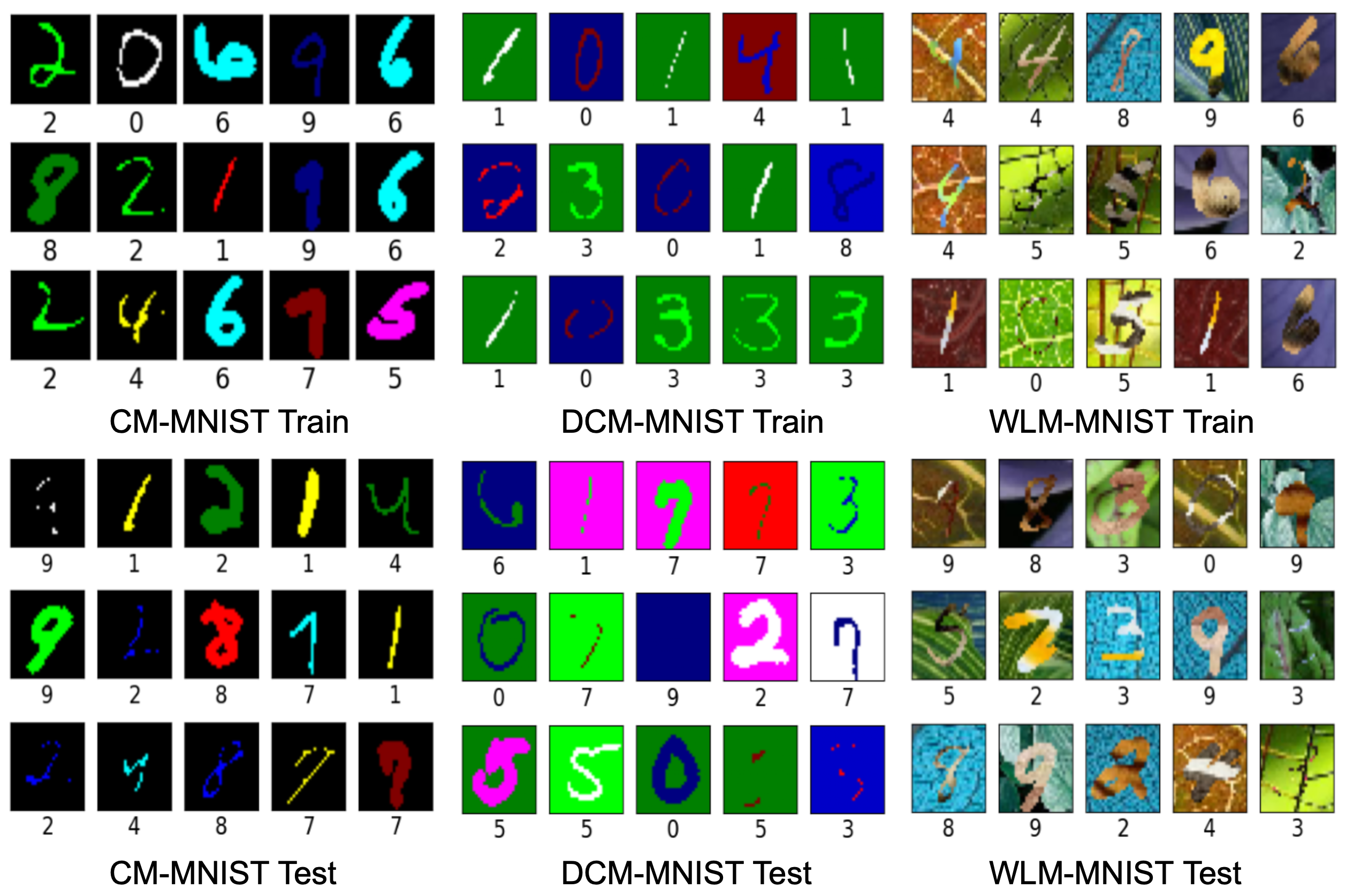}
    }
    \vspace{-7pt}
    \caption{\footnotesize Sample train and test set images of MNIST variants}
     \label{fig:mnistvariants}
\end{wrapfigure}
The three datasets are hence as follows: (i) colored morpho MNIST (CM-MNIST), (ii) double colored morpho MNIST (DCM-MNIST), and (iii) wildlife morpho MNIST (WLM-MNIST). To introduce extreme confounding among the generative factors, we implemented the following conditions. In the training set of the CM-MNIST dataset, the correlation coefficient $r$ between the digit label and digit color, denoted as $r(\text{label, color})$, is maintained as 0.95. Additionally, the digits from $0$ to $4$ are thin, while digits from $5$ to $9$ are thick. In the training set of the DCM-MNIST dataset, the digit label, digit color, and background color jointly assume a fixed set of values 95\% of the time. Specifically, we have $r(\text{label, color}) = r(\text{color, background}) = r(\text{label, background}) = 0.95$. Similar to CM-MNIST, digits from $0$ to $4$ are thin, and digits from $5$ to $9$ are thick. For the WLM-MNIST dataset's training set, the digit shape, digit texture, and background texture collectively adopt a fixed set of attribute values 95\% of the time. Furthermore, as with the previous datasets, digits from $0$ to $4$ are thin, while digits from $5$ to $9$ are thick.

In all MNIST variants discussed, the test set images exhibit no confounding bias. For instance, in the test set of DCM-MNIST, any digit can be either thin or thick, have any background color, or foreground color. Table~\ref{results} presents the results obtained from various data augmentation methods. Notably, our proposed approach, which involves performing an intervention solely on $Z_0$ to eliminate the confounding bias, helps various methods retain state-of-the-art performance compared to other counterfactual data augmentation strategies. Since conditional generative models need unconfounded data to learn conditional generation, we utilize the available unconfounded data in the training set to train all conditional generative models. As observed in Table~\ref{results}, CutMix and AugMix, both popularly used augmentation methods, demonstrate inferior performance compared to ERM-based methods. This discrepancy can be attributed to the fact that intervening on $X$ removes the causal path from $Z_0$, thereby complicating the learning of causal features (as depicted in causal model~\ref{dox} and Figure~\ref{fig:datagen} (c)). For a visual comparison of augmented images produced by different baselines, please refer to Appendix \S~\ref{sec: additional results}.

\noindent \textbf{CelebA:} Unlike MNIST variants, CelebA~\citep{liu2015faceattributes} dataset implicitly contains spurious correlations (e.g., the percentage of \textit{males} with \textit{blond hair} is different from the percentage of \textit{females} with \textit{blond hair}, in addition to the difference in the total number of \textit{males} and \textit{females} in the dataset). To further increase the confounding, we randomly subsample training data as follows: the ratio between non-blond males (60000) to blond males (20000) is $3:1$ and the ratio between non-blond females (10000) to blond females (20000) is $1:2$. In this experiment, we consider the performance of a classifier trained on the augmented data that predicts \textit{hair color} given an image. We check the performance of a downstream classifier using various data augmentation methods. Results are shown in Table~\ref{results}. The results show that the proposed counterfactual data augmentation method helps various methods retain state-of-the-art performance compared to other counterfactual data augmentation strategies. As discussed earlier, simulating causal model~\ref{doc} has the advantage that it is required to generate counterfactuals w.r.t. causal feature $Z_0$ only. Similar to the results on MNIST variants, we observe slightly lower performance for CutMix and AugMix that can be viewed as simulating causal model~\ref{dox}. Additional results on CelebA dataset are provided in Appendix \S~\ref{sec: additional results}.

\begin{table}
\centering
\caption{Test set accuracy results on MNIST variants and CelebA. Simulated interventions (Sim. Interv.) denotes the underlying interventional query used to generate counterfactuals.}
\label{results}
\footnotesize
\scalebox{0.89}{
\begin{tabular}{cl|c|c|c|c}
\toprule
\textbf{Sim. Interv.}&\textbf{Method} & \textbf{CM-MNIST} & \textbf{DCM-MNIST} & \textbf{WLM-MNIST}& \textbf{CelebA} \\
\midrule
N/A&ERM& 69.76 $\pm$ 0.21\% & 50.06 $\pm$ 0.00\%  & 41.76 $\pm$ 0.00\%& 91.21 $\pm$ 0.11\% \\
N/A&ERM-UC &64.91 $\pm$ 0.00\%&48.85 $\pm$ 0.01\%&43.98 $\pm$ 0.03\%&83.02 $\pm$ 0.50\%\\
N/A& ERM-RW &75.35 $\pm$ 1.22\%&57.40 $\pm$ 2.13\%&45.47 $\pm$ 0.87\%&92.61 $\pm$ 0.25\%\\
N/A&GroupDRO~\citep{groupdro}&61.70 $\pm$ 0.50\%&66.70 $\pm$ 0.50\%&22.20 $\pm$ 0.40\%&78.30 $\pm$ 3.10\%\\
N/A&IRM~\citep{arjovsky2019invariant}&55.25 $\pm$ 0.89\%&49.71 $\pm$ 0.71\%&50.26 $\pm$ 0.48\%&66.85 $\pm$ 4.13\%\\
\midrule
$do(X)$&AugMix~\citep{augmix}&73.04 $\pm$ 0.51\%&54.11 $\pm$ 0.12\% &36.58 $\pm$ 1.61\%&91.12 $\pm$ 0.21\%\\
$do(X)$&CutMix~\citep{cutmix}&43.68 $\pm$ 0.42\%&31.97 $\pm$ 1.67\%&16.59 $\pm$ 2.32\%&91.14 $\pm$ 0.18\%\\
\midrule
$do(Z_0\cup \mathbf{Z}_{cnf})$&CGN~\citep{cgn} &   42.15 $\pm$ 3.89\%&47.50 $\pm$ 2.18\% & 43.84 $\pm$ 0.25\%&72.86 $\pm$ 1.59\%\\
\midrule
$do(\mathbf{Z}_{cnf})$&CycleGAN~\citep{cycleGAN}&68.81 $\pm$ 1.11\%& 46.27 $\pm$ 2.14\%&34.67 $\pm$ 0.87\%&90.52 $\pm$ 1.22\%\\
\midrule
$do(Z_0)$ (Ours)&C-VAE~\citep{vae}&69.33 $\pm$ 1.20\%&51.58 $\pm$ 2.36\% &31.88 $\pm$ 1.87\%&91.33 $\pm$ 0.69\%\\
$do(Z_0)$ (Ours)&C-$\beta$-VAE~\citep{betavae} & 70.27 $\pm$ 0.50\% & 52.25 $\pm$ 1.42\% & 32.19 $\pm$ 1.58\%&91.24 $\pm$ 1.53\%\\
$do(Z_0)$ (Ours)&C-GAN~\citep{gan} & 61.30 $\pm$ 1.37\%& 40.99 $\pm$ 0.30\%& 17.50 $\pm$ 0.85\% &90.76 $\pm$ 2.77\%\\
$do(Z_0)$ (Ours)&C-DM~\citep{ho2020denoising} & \textbf{80.34$\pm$ 0.01 \%}&\textbf{73.79 $\pm$ 0.20\%}&\textbf{62.72 $\pm$ 0.02\%}&\textbf{94.73 $\pm$ 1.48\%}\\
\bottomrule
\end{tabular}
}
\vspace{-10pt}
\end{table}

\vspace{-7pt}
\section{Conclusions}
\vspace{-6pt}
In this paper, we carefully examined the detrimental impacts of confounding when performing data augmentation in DNN models. We established an association between confounding and mutual information within the considered causal processes and conducted a formal investigation of various methods for counterfactual data augmentation. Additionally, we demonstrated a strong connection between the removal of confounding and invariant causal feature learning techniques. By proposing a simple yet highly effective counterfactual data augmentation method, we showed possible methods to address the issue of confounding bias in training data. Notably, our method offers a practical solution for practitioners seeking to leverage counterfactual data augmentation to learn causal invariant features from confounded data. Our work does not present any detrimental effects on the broader scientific community.

\bibliography{bibliography}

\clearpage
\appendix
\setcounter{table}{0}
\renewcommand{\thetable}{A\arabic{table}}

\setcounter{figure}{0}
\renewcommand{\thefigure}{A\arabic{figure}}
\section*{Appendix}
\vspace{-5pt}

In this appendix, we include the following details that we could not fit into the main paper due to space constraints.
\begin{itemize}[leftmargin=*]
    \item Causality preliminaries are presented in \S~\ref{sec: preliminaries}
    \item Empirical connection between confounding and spurious correlations is presented in \S~\ref{sec: confvscorr}
    \item Experimental setup and implementation details are discussed in \S~\ref{sec: expsetup}
    \item Additional results and qualitative results are provided in \S~\ref{sec: additional results}
\end{itemize}

\vspace{-5pt}
\section{Causality Preliminaries}
\label{sec: preliminaries}
\vspace{-5pt}

\noindent \textbf{Structural Causal Models:} A Structural Causal Model (SCM) $\mathcal{S}(\mathbf{V}, \mathbf{U}, \mathcal{F}, P_{\mathbf{U}})$ encodes cause-effect relationships among a set of random variables $\{\mathbf{V}\cup \mathbf{U}\}$ in the form of a set of structural equations $\mathcal{F}$ relating each variable $X\in \{\mathbf{V}\cup \mathbf{U}\}$ with its parents $pa_{X}\in \{\mathbf{V}\cup \mathbf{U}\}\setminus \{X\}$. That is, each variable $X\in \mathbf{V}$ can be written as  $X = f(pa_{X})$ for some $f\in \mathcal{F}$. The variables in $\mathbf{U}$ are usually referred to as exogenous variables that denote uncontrolled external factors. $P_{\mathbf{U}}$ is the probability distribution of exogenous variables. The variables in $\mathbf{V}$ are usually referred as endogenous variables. 

\noindent \textbf{Causal Graphical Models:} Starting with an SCM, one can construct a directed causal graphical model $\mathcal{G}=(\mathbf{V}\cup \mathbf{U},\mathcal{E})$ as follows. $\mathcal{G}=(\mathbf{V}\cup \mathbf{U},\mathcal{E})$ is a causal graphical model in which the set of vertices $\mathbf{V}\cup \mathbf{U}$ corresponds to the set of endogenous and exogenous variables and the set of edges $\mathcal{E}$ corresponds to the set of structural equations $\mathcal{F}$ relating each variable with its parents. Concretely, if $X = f(pa_{X})$, then $\forall Y \in pa_{X}$, there exists a directed edge from $Y$ to $X$ in $\mathcal{G}$. A \textit{path} in a causal graph is defined as a sequence of unique vertices $X_1, X_2, ..., X_n$ with an edge between each consecutive vertices $X_i$ and $X_{i+1}$ where the edge between $X_i$ and $X_{i+1}$ can be either $X_i \rightarrow X_{i+1}$ or $X_{i+1}\rightarrow X_i$. A \textit{directed path} is defined as a sequence of unique vertices $X_0, X_1,..., X_n$ with an edge between each consecutive vertices $X_i$ and $X_{i+1}$ so that the the edge between $X_i$ and $X_{i+1}$ takes from $X_i \rightarrow X_{i+1}$. $Anc(X)$ is the set of all vertices that have a directed path to $X$.

A \textit{collider} is defined w.r.t. a path as a vertex $X_i$ which has a structure of the form: $\rightarrow X_i \leftarrow$ (direction of arrows imply the direction of edges along the path). A path $p$ between $X$ and $Y$ given a set of variables $\mathbf{S}$ is said to be \textit{open}, if and only if: (i) every collider node on $p$ is in $\mathbf{S}$ or has a descendant in $\mathbf{S}$, and (ii) no other non-colliders in $p$ are in $\mathbf{S}$. If the path $p$ is not open, then $p$ is said to be \textit{blocked}. $X$ and $Y$ are \textit{$d$-separated} given $\mathbf{S}$, if and only if every path from $X$ to $Y$ is blocked by $\mathbf{S}$.

A directed path starting from a node $X$ and ending at a node $Y$ is called a \textit{causal path} from $X$ to $Y$. A path that is not a causal path is called a \textit{non-causal path}. For example, the path $X\rightarrow Z\rightarrow Y$ is a causal path from $X$ to $Y$, and the path $X\leftarrow Z\rightarrow Y$ is a non-causal path from $X$ to $Y$. 
\begin{definition}
\textbf{(The Back-door Criterion)} Given a pair of variables $(X,Y)$, a set of variables $\mathbf{S}$ satisfies the backdoor criterion relative to $(X, Y)$ if no node in $\mathbf{S}$ is a descendant of $X$ and $\mathbf{S}$ blocks every backdoor path between $X$ and $Y$. 
\end{definition}
\begin{definition}
\label{def:ace}
    \textbf{(Average Causal Effect)} The Average Causal Effect (ACE) of a variable $X$ on target variable $Y$ w.r.t. at an intervention $x$ w.r.t. a baseline treatment $x^*$ is defined as
    \begin{equation*}
    ACE_X^Y \coloneqq  \mathbb{E}[Y|do(X=x)]-\mathbb{E}[Y|do(X=x^*)]
\end{equation*}
\end{definition}
If a set $\mathbf{S}$ of variables satisy the backdoor criterion relative to the pair of variables $X,Y$, the $ACE_X^Y$ can be calculated using the adjustment formula below.
\begin{equation*}
\small
\begin{aligned}
     ACE_X^Y \coloneqq  \mathbb{E}[Y|do(X=x)]-\mathbb{E}[Y|do(X=x^*)] = \mathbb{E}_{\mathbf{s}\sim\mathbf{S}}\mathbb{E}[Y|X=x,\mathbf{S}=\mathbf{s}]-\mathbb{E}_{\mathbf{s}\sim\mathbf{S}}\mathbb{E}[Y|X=x^*,\mathbf{S}=\mathbf{s}] 
\end{aligned}
\end{equation*}
\section{Confounding vs Spurious Correlation}
\label{sec: confvscorr}
Section~\ref{sec: info theoretic} of the main paper presents a way of relating confounding $CNF(Z_i;Z_j)$ and mutual information $I(Z_i;Z_j)$ between a pair of generative factors $Z_i,Z_j$. Table~\ref{tab:confandcorr} presents an empirical study that serves as evidence that confounding is directly proportional to spurious correlation between generative factors \textit{color} and \textit{digit} in the CM-MNIST dataset. We set a spurious correlation parameter $r$ while generating data. For instance, if $r=0.9$, the color and shape of CM-MNIST data take on specific predefined values 90\% of the time. We utilize a random number generator to simulate this behavior. We then evaluate Equation~\ref{eq: cnf} in the main paper using the observed data distribution. The results show the explicit relationship between confounding and spurious correlations herein. 
\begin{table}[H]
    \centering
    \footnotesize
    \begin{tabular}{llllll}
    \toprule
       Spurious correlation ($r$) & 0.10&0.20&0.50&0.90&0.95\\
        \midrule
        $CNF (color, digit)$ &0.072&0.249&1.244&3.585&4.041\\
 \bottomrule
    \end{tabular}
    \vspace{5pt}
    \caption{\footnotesize Relationship between the correlation coefficient and confounding between color and digit in CM-MNIST dataset. Correlation is directly proportional to confounding.}
    \label{tab:confandcorr}
\end{table}
\section{Implementation Details}
\label{sec: expsetup}
\noindent \textbf{Morpho MNIST:}
In this paper, we consider two transformations of MNIST images as described in~\citep{morpho}: the \textit{thin} and \textit{thick} variants of MNIST digits (additionally, we introduce confounding factors related to foreground color and background color as described in the main text). In the construction of Morpho MNIST data, we modify the thickness of digits by a specified proportion, either thinning or thickening them. Sample images demonstrating these variations can be seen in Figure~\ref{fig:thinthick}. For the training set, digits ranging from $0$ to $4$ are transformed into thin versions with a thinness value of $0.9$, while digits from $5$ to $9$ are transformed into thick versions with a thickness value of $0.9$. In the test set, digits undergo random thinning or thickening, with the thinness or thickness value determined by $\alpha$, which follows a normal distribution with a mean of $0.9$ and a standard deviation of $0.2$ i.e., $\alpha \sim \mathcal{N}(0.9, 0.2)$.

\begin{figure}
    \centering
    \includegraphics[width=1.0\textwidth]{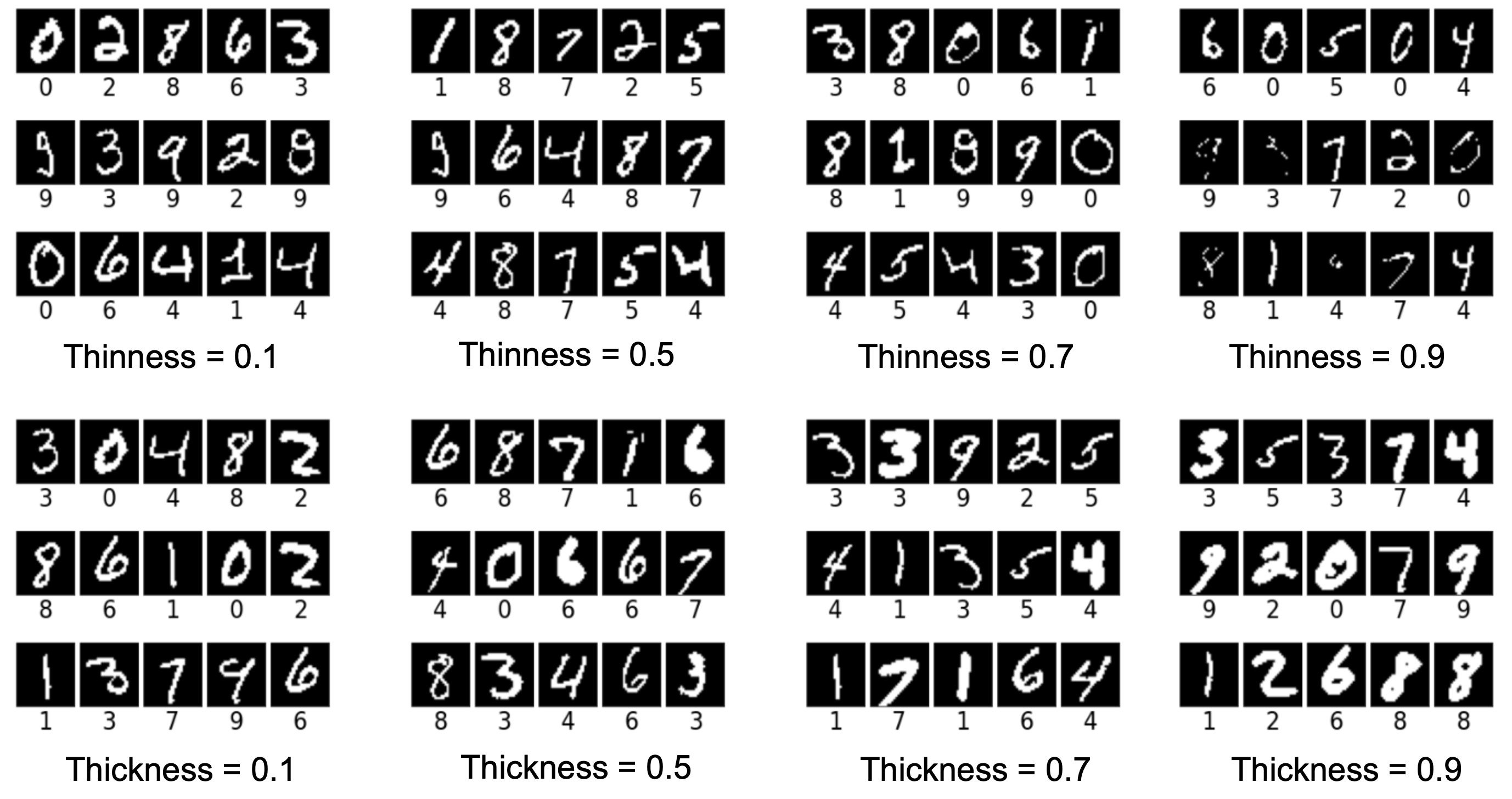}
    \caption{Morpho MNIST images for various thinness and thickness values}
    \label{fig:thinthick}
\end{figure}

\noindent \textbf{Downstream classifiers and baselines:} After performing counterfactual data augmentation, we use the following convolutional neural network (CNN) architectures to quantitatively study the usefulness of such data in various methods.

For MNIST experiments, the downstream classifier is a convolutional neural network of four convolutional layers with max-pooling after the first layer and average pooling after the fourth layer. A feed-forward layer is added at the end of the average pooling layer to make predictions. We use \textit{ReLU} activation for the internal/hidden layers and \textit{softmax} activation after the final prediction layer. For CelebA experiments, the downstream classifier is a convolutional neural network of six convolutional blocks followed by a classification/feedforward layer. Each convolutional block consists of a \textit{batch norm} layer, a convolutional layer and dropout with a probability of 0.2. We use \textit{leaky ReLU} activation for the convolutional layers and \textit{sigmoid} after the final prediction layer. We use the \textit{Adam} optimizer in all experiments.

The downstream classifiers are trained for 30 epochs in all the experiments. For each of the baselines, we use code from their official repositories. For ERM-RW, we replicate unconfounded data present in the training set multiple times such that the size of the replicated data is the same as the original dataset size. We set the number of data points to augment as a hyperparameter $\alpha$. To avoid a large search space of $\alpha$, we let $\alpha$ take on values from the set $\{1000,2000,5000,10000,20000,50000\}$. In many cases, large $\alpha$ values tend to give better results. Small $\alpha$ values are preferred when the performance saturates after a particular value of $\alpha$.

\section{Additional Results and Qualitative Results}
\label{sec: additional results}

Similar to the experiments in the main paper on CelebA, we perform an additional set of experiments by considering a different confounding setting. In this case, we consider spurious correlations between the attributes \textit{gender} and \textit{smiling}, while studying the performance of a classifier trained on the augmented data that predicts whether a person is \textit{smiling} given an image.
Concretely, we subsample the CelebA dataset such that the training set contains 37000 not-smiling males, 3000 smiling males, 10000 not-smiling females, and 40000 smiling females.

The test set contains 3000 not-smiling males, 20000 smiling males, 20000 not-smiling females, and 2000 smiling females. Similar to the results in the main paper, we see that we achieve state-of-the-art performance using counterfactual data augmentation by simulating causal model~\ref{doc}. As discussed in the main paper, simulating causal model in Equation~\ref{doc} has the advantage that it is required to generate counterfactuals w.r.t. causal feature $Z_0$ only. Since there are more images in ERM UC (at least 3000 images from each of smiling males, not smiling males, smiling females, not smiling females from the setting), we observe good results in ERM-UC. We could, however, match the performance of ERM-UC using C-DM.

\begin{wraptable}[16]{r}{0.6\linewidth}
\centering
\caption{Test set accuracy results in CelebA. Simulated interventions (Sim. Interv.) denotes the underlying interventional query used to generate counterfactuals.}
\label{appendix results}
\footnotesize
\scalebox{0.8}{
\begin{tabular}{cl|c}
\toprule
\textbf{Sim. Interv.}&\textbf{Method} &\textbf{CelebA} \\
\midrule
N/A&ERM&  80.94 $\pm$ 0.97\% \\
N/A&ERM-UC &  88.49 $\pm$ 0.13\%\\
N/A& ERM-RW & 83.12 $\pm$ 0.82\%\\
N/A&GroupDRO~\citep{groupdro}& 77.10 $\pm$ 0.30\%\\
N/A&IRM~\citep{arjovsky2019invariant}& 68.18 $\pm$ 0.24\%\\
\midrule
$do(X)$&AugMix~\citep{augmix}& 80.26 $\pm$ 0.64\%\\
$do(X)$&CutMix~\citep{cutmix}& 79.29 $\pm$ 0.69\%\\
\midrule
$do(Z_0\cup \mathbf{Z}_{cnf})$&CGN~\citep{cgn} &  74.52 $\pm$ 1.72\%\\
\midrule
$do(\mathbf{Z}_{cnf})$&CycleGAN~\citep{cycleGAN}& 82.35 $\pm$ 1.09\%\\
\midrule
$do(Z_0)$ (Ours)&C-VAE~\citep{vae}& 81.71 $\pm$ 1.83\%\\
$do(Z_0)$ (Ours)&C-$\beta$-VAE~\citep{betavae} & 80.03 $\pm$ 0.43\%\\
$do(Z_0)$ (Ours)&C-GAN~\citep{gan} & 80.13 $\pm$ 0.94\%\\
$do(Z_0)$ (Ours)&C-DM~\citep{ho2020denoising} &87.36 $\pm$ 1.20\%\\
\bottomrule
\end{tabular}
}
\end{wraptable}
The following images show the counterfactual images generated by various methods on Morpho MNIST datasets. We show counterfactual images by AugMix, CutMix that simulate causal model~\ref{dox}, CGN simulating causal model~\ref{docz}, CycleGAN simulating causal model~\ref{doz}, and conditional diffusion model~\ref{doc}. As discussed in the main paper, AugMix and CutMix, which can be seen implementing causal model~\ref{dox} cannot remove the implicit confounding in the data i.e., digit color and shape are still spuriously correlated in the augmented images. When the digits are very thin, CGN fails to capture the shape of the digit. CycleGAN and conditional diffusion models can generate good counterfactuals helping a downstream classifier to achieve good performance. 

\begin{figure}[H]
\centering
\subfigure[CM-MNIST Samples]{
 \includegraphics[width=0.32\textwidth]{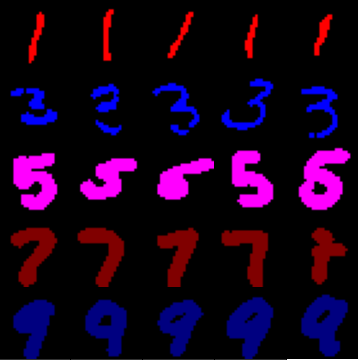}}
\subfigure[DCM-MNIST Samples]{
\includegraphics[width=0.32\textwidth]{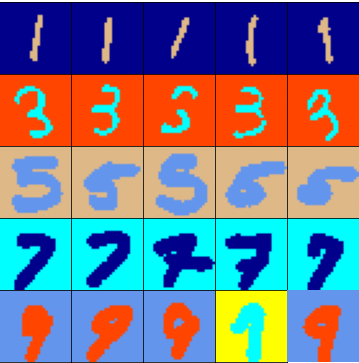}}
\subfigure[WLM-MNIST Samples]{
 \includegraphics[width=0.32\textwidth]{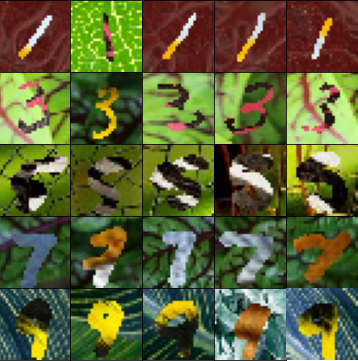}}
\end{figure}

\begin{figure}[H]
\centering
\subfigure[CM-MNIST AugMix]{
 \includegraphics[width=0.32\textwidth]{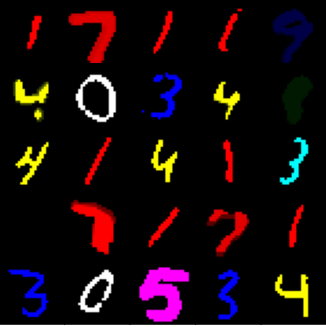}}
\subfigure[DCM-MNIST Augmix]{
\includegraphics[width=0.32\textwidth]{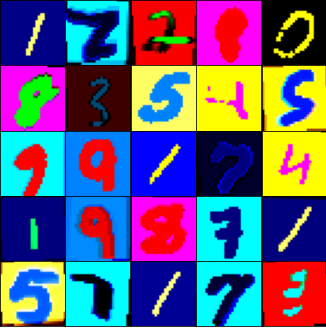}}
\subfigure[WLM-MNIST Augmix]{
 \includegraphics[width=0.32\textwidth]{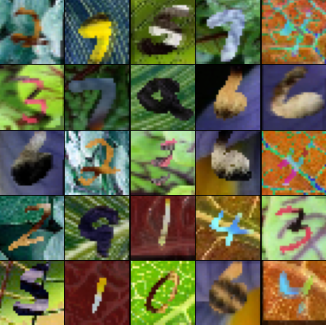}}
\end{figure}

\begin{figure}[H]
\centering
\subfigure[CM-MNIST CutMix]{
  \includegraphics[width=0.32\textwidth]{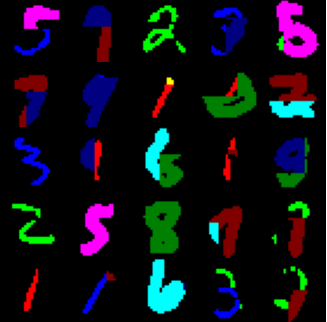}}
\subfigure[DCM-MNIST CutMix]{
  \includegraphics[width=0.32\textwidth]{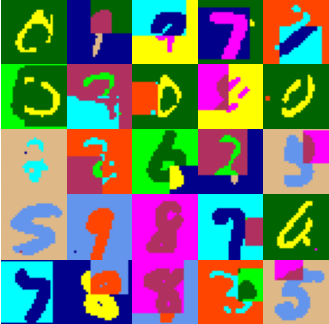}}
\subfigure[WLM-MNIST CutMix]{
  \includegraphics[width=0.32\textwidth]{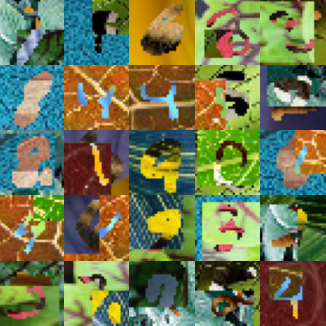}}
\end{figure}

\begin{figure}[H]
\subfigure[CM-MNIST CGN]{
  \includegraphics[width=0.32\textwidth]{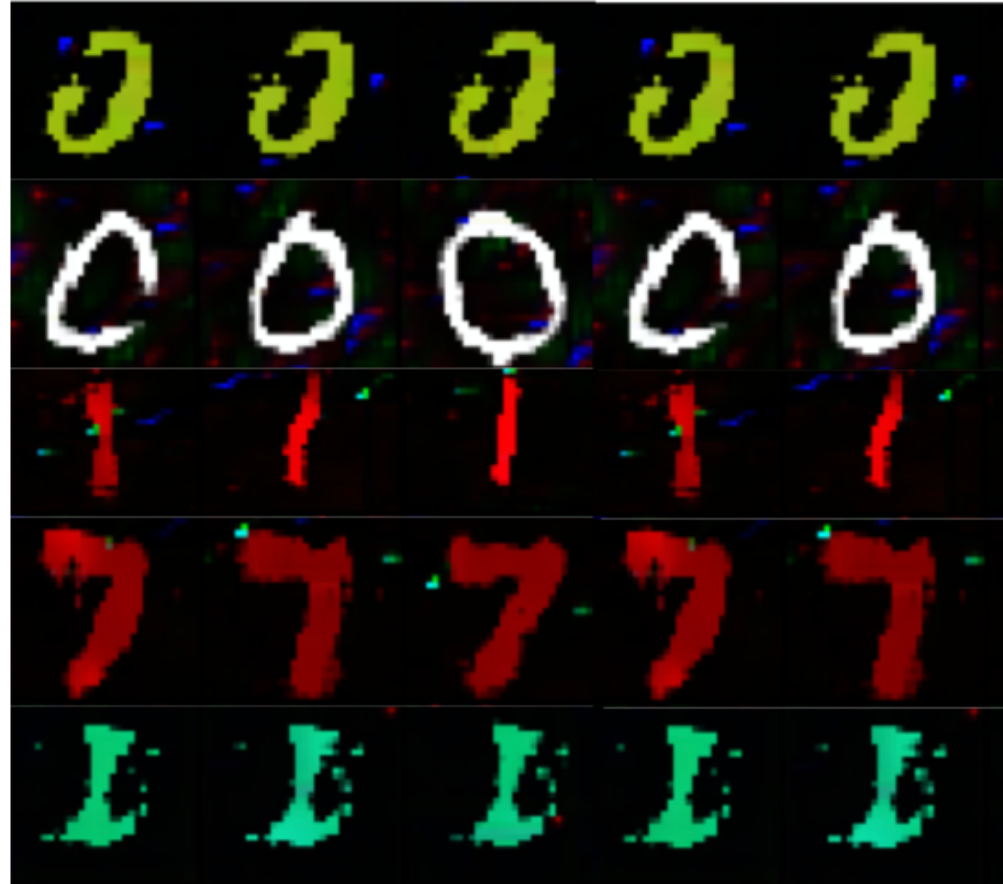}}
\subfigure[DCM-MNIST CGN]{
  \includegraphics[width=0.32\textwidth]{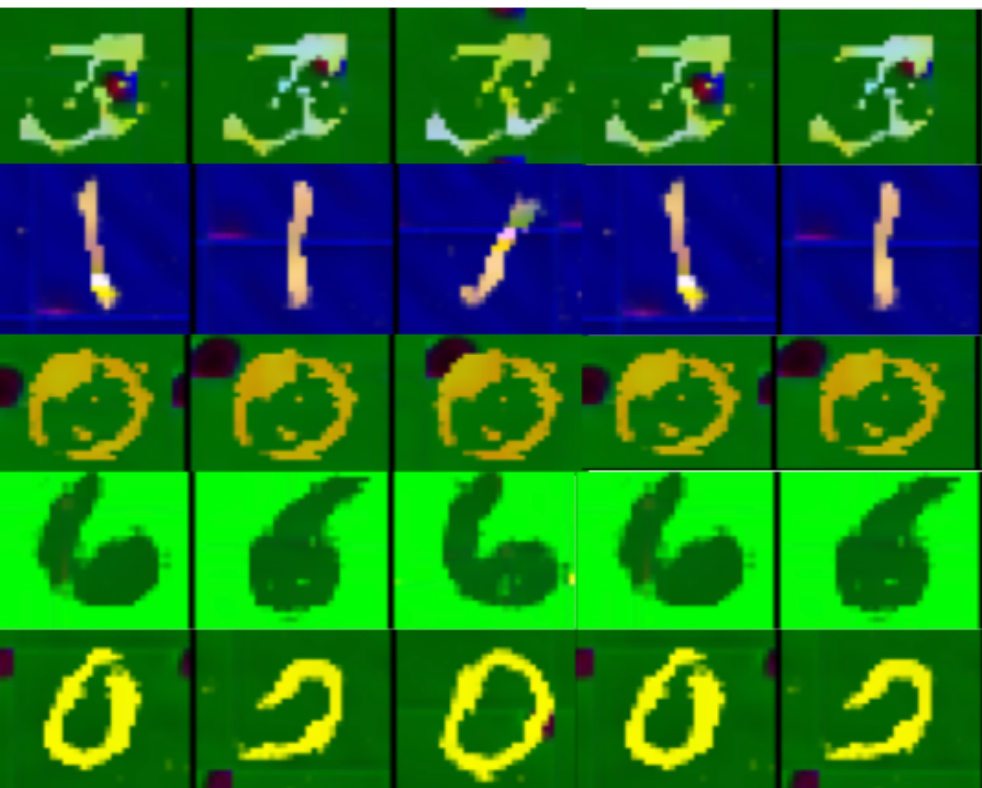}}
\subfigure[WLM-MNIST CGN]{
  \includegraphics[width=0.32\textwidth]{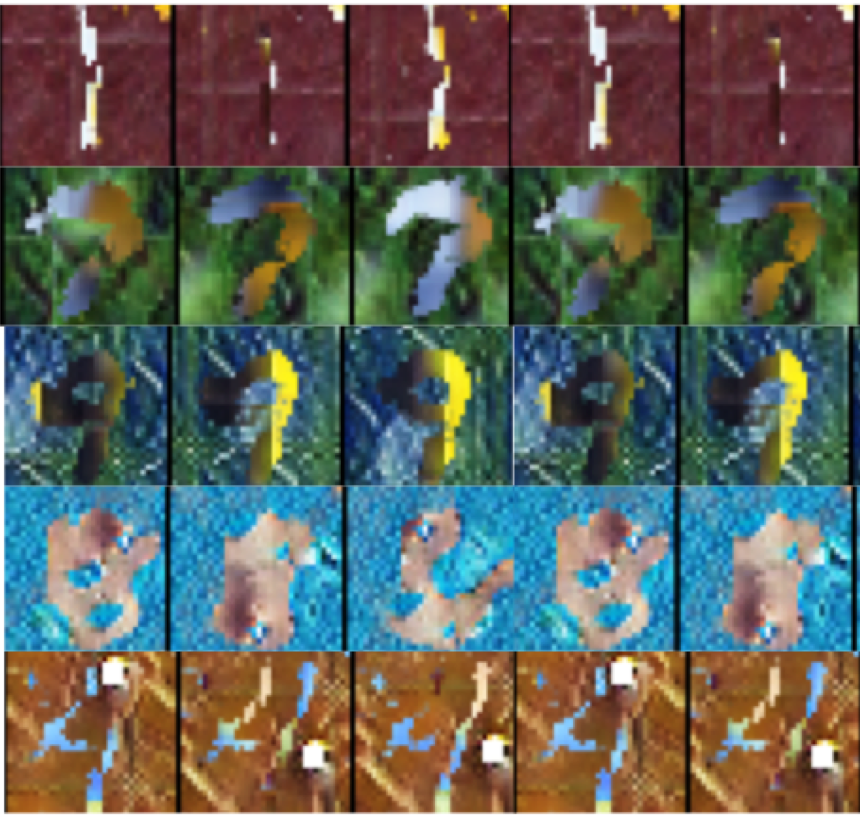}}
\end{figure}

\begin{figure}[H]
\centering
\subfigure[CM-MNIST CycleGAN]{
  \includegraphics[width=0.32\textwidth]{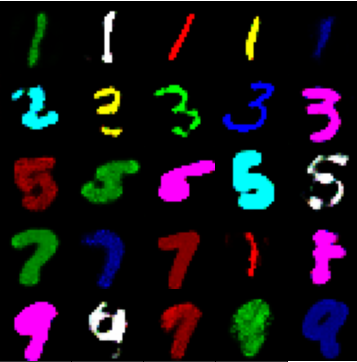}}
\subfigure[DCM-MNIST CycleGAN]{
  \includegraphics[width=0.32\textwidth]{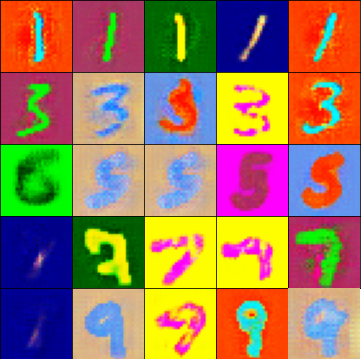}}
\subfigure[WLM-MNIST CycleGAN]{
  \includegraphics[width=0.32\textwidth]{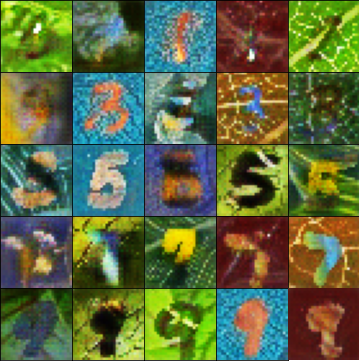}}
\end{figure}

\begin{figure}[H]
\subfigure[CM-MNIST C-DM]{
  \includegraphics[width=0.32\textwidth]{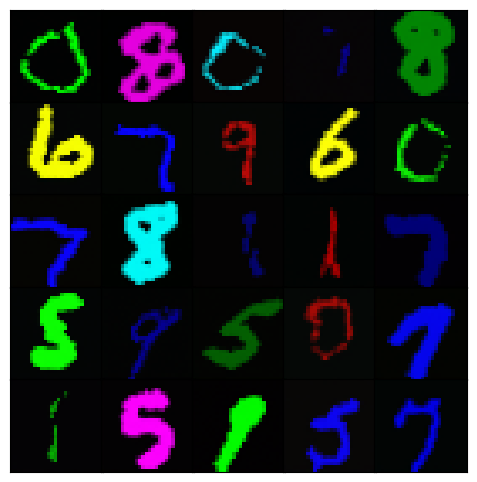}}
\subfigure[DCM-MNIST C-DM]{
  \includegraphics[width=0.32\textwidth]{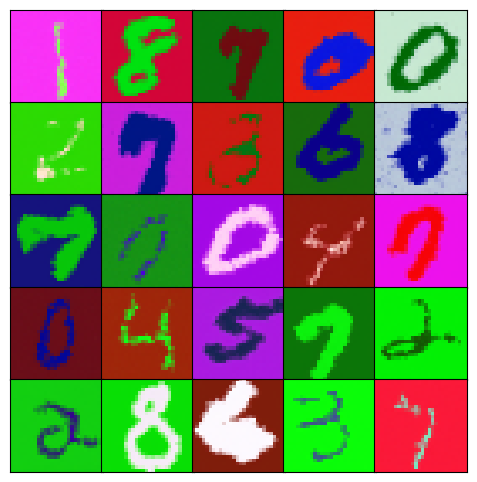}}
\subfigure[WLM-MNIST C-DM]{
  \includegraphics[width=0.32\textwidth]{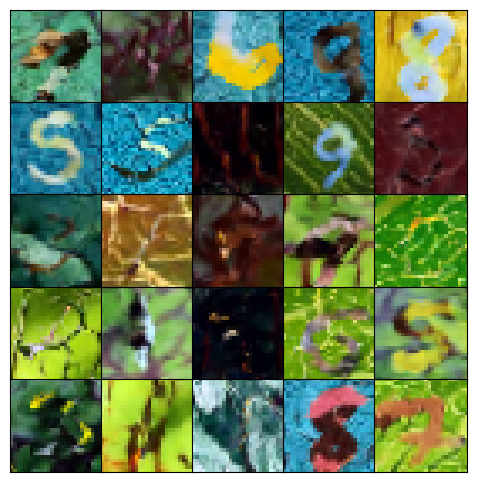}}
\end{figure}

\end{document}